\let\NAT@parse\relax
\pgfplotsset{compat=1.17}
\definecolor{gray}{rgb}{0.35,0.35,0.35}
\definecolor{blue}{rgb}{0,0,1}
\definecolor{red}{rgb}{1,0,0}
\definecolor{orange}{rgb}{0.75, 0.4, 0}
\definecolor{green}{rgb}{0.0, 0.5, 0.0}
\newtheorem*{proposition*}{proposition}
\newcommand{\ignore}[1]{}
 \def\A{\mathcal{A}}
\newcommand{\Cpp}{C\raise.08ex\hbox{\tt ++}\xspace}
\newtheorem*{problem*}{Problem} 
\newtheorem{theorem}{Theorem}
\newtheorem{corollary}{Corollary}
\newtheorem{prop}{Proposition}
\theoremstyle{definition}
\theoremstyle{plain}
\newtheorem{observation}{Observation}
\def\0{\bm{0}}
\def\thmhead@plain#1#2#3{%
  \thmname{#1}\thmnumber{\@ifnotempty{#1}{ }\@upn{#2}}%
  \thmnote{ {\the\thm@notefont#3}}}
\let\thmhead\thmhead@plain
\newcommand{\storage}{\ensuremath{W}\xspace}
\newcommand{\cols}{\ensuremath{c}\xspace}
\newcommand{\rows}{\ensuremath{r}\xspace}
\newcommand{\nobjs}{\ensuremath{n}\xspace}
\newcommand{\arr}{\ensuremath{\A}\xspace}
\newcommand{\inc}{\ensuremath{A}\xspace}
\newcommand{\dep}{\ensuremath{D}\xspace}
\newcommand{\look}{\ensuremath{\ell}\xspace}
\begin{document}

\title{ \LARGE \bf
Fully Packed and Ready to Go: High-Density,\\ Rearrangement-Free, Grid-Based Storage and Retrieval
}

\author{Tzvika Geft$^{*}$, Kostas Bekris$^{*}$, and Jingjin Yu$^{*}$%
\thanks{$^{*}$Computer Science Dept., Rutgers University, New Brunswick NJ, USA}
}

\maketitle

\begin{abstract}
Grid-based storage systems with uniformly shaped \emph{loads} (e.g., containers, pallets, totes) are commonplace in logistics, industrial, and transportation domains. 
A key performance metric for such systems is the maximization of space utilization, which requires some loads to be placed behind or below others, preventing direct access to them. Consequently, dense storage settings bring up the challenge of determining how to place loads while minimizing costly rearrangement efforts necessary during retrieval. 
This paper considers the setting involving an inbound phase, during which loads arrive, followed by an outbound phase, during which loads depart. The setting is prevalent in distribution centers, automated parking garages, and container ports. In both phases, minimizing the number of rearrangement actions results in more optimal (e.g., fast, energy-efficient, etc.) operations. In contrast to previous work focusing on stack-based systems, this effort examines the case where loads can be freely moved along the grid, e.g., by a mobile robot, expanding the range of possible motions. %
We establish that for a range of scenarios, such as having limited prior knowledge of the loads' arrival sequences or grids with a narrow opening, a (best possible) rearrangement-free solution always exists, including when the loads fill the grid to its capacity.
In particular, when the sequences are fully known, we establish an intriguing characterization showing that rearrangement can always be avoided if and only if the open side of the grid (used to access the storage) is at least 3 cells wide.
We further discuss useful practical implications of our solutions.

\end{abstract}

\IEEEpeerreviewmaketitle

\section{Introduction}

The past two decades have witnessed the dramatic rise of robotics and other automation technologies for transporting uniform-sized \emph{loads} or {\em items} (e.g., containers, pallets, totes, etc.) in logistics applications. Notable examples range from thousands of mobile robots roaming in a single warehouse helping order fulfillment \cite{wurman2008coordinating} to automated cranes and AGVs transporting containers at shipping yards \cite{auto-port}. 
Operations at logistics hubs typically occur in two distinct phases: first, the storage of incoming items, and later, their retrieval for further transport.
Such operations are common in facilities in which goods have to change transport mode or vehicle, with only a limited storage time in between.
In addition to the previous examples, this scenario also arises in cross-docking~\cite{cross-dock}, where goods delivered by incoming trucks are quickly sorted and reorganized for transport on outbound trucks.
The scenario can also arise as a subtask in Automated Storage and Retrieval Systems (AS/RS)~\cite{asrs-survey, yalcin2017multi}, where goods must be resequenced as they arrive from (longer-term) storage for robotic palletization or further transport~\cite{carton-seq}.
Space (footprint-wise) and time-efficient algorithms for routing loads are required as the alternative options involve coupling system components (thereby slowing down delivery) or a lot of ``buffer'' space~\cite{carton-seq}.

\begin{figure}[t]
    \centering
    \includegraphics[width=\columnwidth]{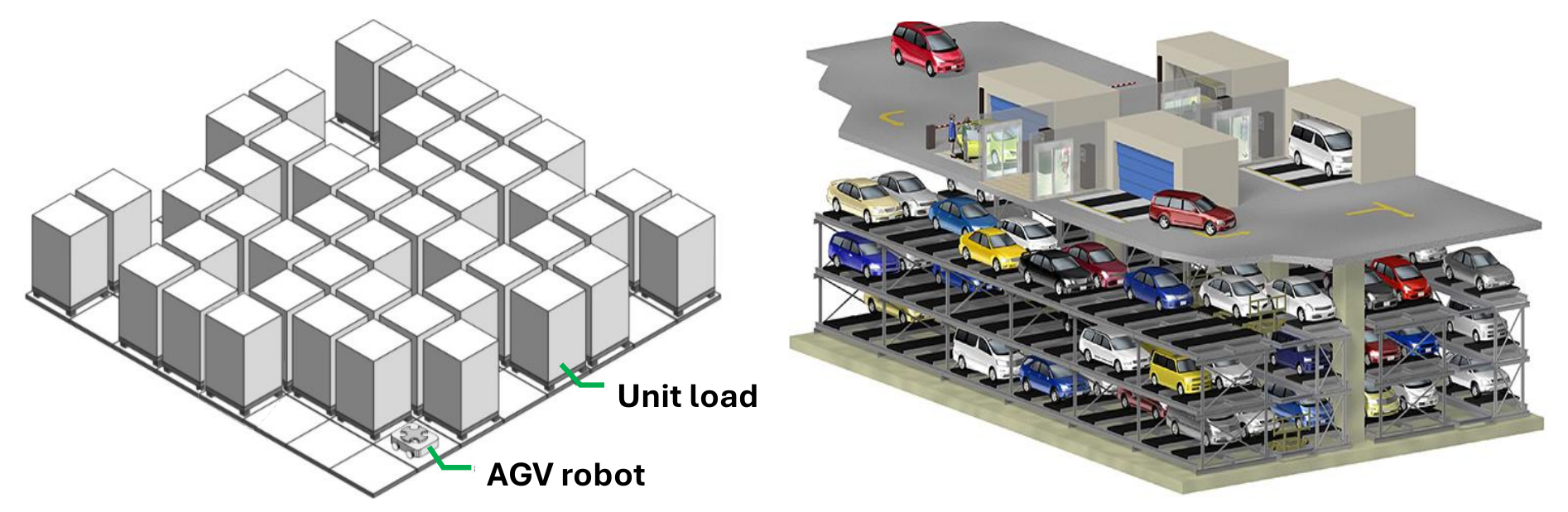}
    \caption{Application examples. Left: Grid-based storage using robotic vehicles (AGVs) for transferring loads~\cite{yalcin2017multi}. The AGV can go beneath a load and can move in all four grid directions. Right: Illustration of an automated parking garage where vehicles are the load to be autonomously placed and retrieved \cite{auto-garage}. Similar to the first case, AGVs can go under vehicles to transport them.}
        \label{fig:motivation}
\end{figure}

A central challenge relevant to the design and operation of the aforementioned environments is determining how to temporarily store items to minimize time, energy, and space utilization.
A fundamental trade-off arises between storage \emph{density}, i.e., the percentage of space used for storage (as opposed to space for access, such as aisles), and rearrangement costs of storage and retrieval, which increase in high-density storage due to limited accessibility to loads.
Each relocation of loads, requiring time-consuming pick-up and drop-off operations, incurs non-trivial additional costs.
Adding to the challenge, information on the arrival and departure time/order of individual loads can be limited or even unknown~\cite{DBLP:conf/esa/BrinkZ14}. %

This work investigates the coupling between storage density and rearrangement efforts by asking the following question: How far can we push to maximize storage space utilization while simultaneously minimizing load rearrangement?
We focus on automated grid-based storage systems, akin to Puzzle-Based Storage (PBS)~\cite{gue2007puzzle}, which are common as loads are typically uniform-sized.
The setup, mirroring practical settings \cite{wurman2008coordinating,JJ-parking}, corresponds to a 2D grid-based storage where each grid cell can store a load. Loads can be moved along the grid in cardinal directions by a mobile robot (e.g., an AGV) provided that they do not collide with other robots/loads. 

\begin{figure*}[ht]
    \centering   
    \begin{overpic}
    [width=\linewidth]{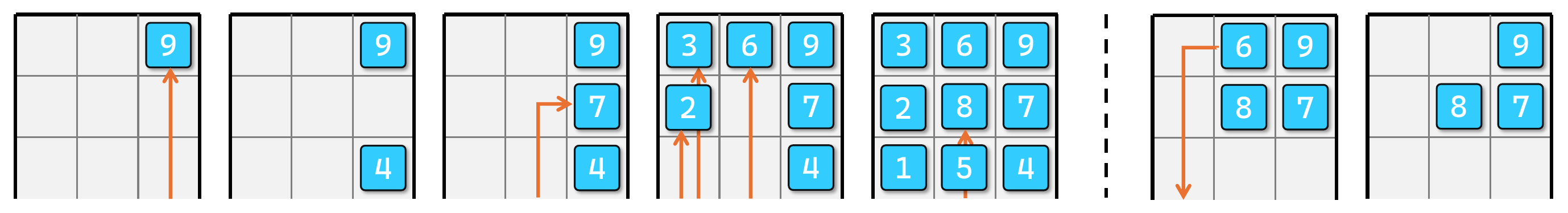}
    \put(6.1, -1.3){{\small ($a$)}}
    \put(19.9, -1){{\small ($b$)}}
    \put(33.5, -1){{\small ($c$)}}
    \put(47., -1){{\small ($d$)}}
    \put(60.8, -1){{\small ($e$)}}
    \put(78.5, -1){{\small ($f$)}}
    \put(92.3, -1){{\small ($g$)}}
    \end{overpic}
    \vspace{-1mm}
    \caption{A solution without relocations for an input arrival sequence $\inc = (9,4,7,3,6,2,1,8,5)$ and departure sequence $\dep = (1,2,3,4,5,6,7,8,9)$ for a $(3 \times 3)$ grid accessible only from the bottom.
    Snapshots are illustrated from left to right. (a) The first arriving (load) 9 can be directly stored at the top using a (straight) upward path empty. (b) Next, 4 arrives and can be stored in front of 9, leaving the space in between. (c) 7 is stored using an upward path with a single turn, i.e., a column-adjacent path. (d) 3, 6, 2, can be stored as shown using upward paths. (e) 1, 8, 5 can be stored similarly. At this stage, all loads have arrived. (f) 1, 2, 3, 4, and 5 can be retrieved sequentially directly using downward paths. Then, 6 can be retrieved using a downward path with a turn, another column-adjacent path. (g) 7, 8, and 9 can be retrieved using downward paths.}
    \label{fig:intro_ex} 
\end{figure*}

To properly introduce our results (more formal problem statements to follow in \Cref{sec:defs}), the storage area \storage is a rectangular grid with \rows rows and \cols columns, accessible only via the bottom size (see, e.g., Fig.~\ref{fig:intro_ex}).
\storage stores \nobjs labeled (i.e., distinguishable) loads, each occupying one grid cell.
A mobile robot/manipulator can perform the following \emph{actions}: \emph{storage} of an arriving load, \emph{retrieval} of a departing load, or \emph{relocation} (\emph{rearrangement}) of a load from one cell to another, always following a collision-free path through currently empty cells.
Loads arrive sequentially in the order ${\inc = (a_1, \ldots, a_n)}$, where each load needs to be stored in $\storage$ before the next arrival.
Similarly, loads must be later retrieved via another sequence $\dep = (d_1, \ldots, d_n)$.
The objective is to minimize the total number of actions by avoiding rearranging loads.
In the best case, $2n$ actions (a storage action and a retrieval action for each load) are necessary.
A problem/solution is \emph{offline} when \inc and \dep are known in advance. 
Otherwise, the problem is \emph{online}, where two versions are examined.
The first assumes a \emph{lookahead} $\ell$, where only the $\ell$ next arriving loads are known, along with their positions in the departure sequence $\dep$.
In the second, \emph{fully online} setting, no knowledge of \inc and \dep is assumed, other than the number of loads \nobjs.

\textbf{Contributions.}

Our key insight is that the density-rearrangement tradeoff can be (nearly) eliminated given prior information on the arrival and departure sequences.
The precise contributions are as follows:
\begin{itemize}[leftmargin=4mm]
\item For the offline setting, relocation may be required (see \Cref{fig:2x2}) when the number of columns $\cols \le 2$. Surprisingly, for any $\cols \ge 3$, however, it is \emph{always} possible to avoid relocation, for any number of loads, including for storage at full capacity. %
The paper also provides an $O(\nobjs)$-time algorithm that determines how to store \nobjs loads in \storage to solve the problem without needing relocation for $\cols \ge 3$.
\item For the online setting, knowing the full arrival sequence is not necessary. A lookahead of $3\rows -1$ (for $\rows$ rows) suffices for running the presented zero-relocation algorithm for $\cols \ge 3$. For the minimum possible lookahead of 1, it remains possible to find (near-)optimal solutions:
\begin{itemize}[leftmargin=4mm]
    \item For $n \le \rows(\cols-1) +1$, i.e., when it is possible to keep a single column nearly empty, there is an optimal solution that achieves no relocations.
    \item If $\cols \le \rows$, we propose a 1.125-approximate minimization, i.e., a solution where the number of actions is within 1.125 of the optimum for any storage density.
\end{itemize}
\item For the fully online setting, however, one has to sacrifice density to reduce rearrangement. We make a precise characterization and determine the maximum achievable density for a given bound on the number of actions allowed per load.
In particular, we show that if one wishes to guarantee no relocations (i.e., a single action for each storage/retrieval), we must use a very natural, ubiquitous aisle-based storage layout (see \Cref{fig:6x6_depth0}, left) with a maximum density of $2/3$.
    
\end{itemize}

Besides strong guarantees on minimizing relocations during high-density storage and retrieval, our algorithms also deliver desirable paths for storage access. In almost all the cases above, each path taken to store/retrieve a load lies in one column with a possible additional short lateral segment to a cell adjacent to that column, making each motion \emph{distance-optimal} up to an additive factor of 1.
We refer to these paths as \textbf{column-adjacent}.
Beyond speeding up access by limiting distance and turns, such paths enable the use of a straight-access manipulator with only a limited lateral reach from outside the storage area.
This includes the recently introduced robot-arm mounted "bluction" tool \cite{bluction} for accessing objects on shelves. %
The access paths are also naturally suitable for multi-robot execution, as will be discussed in the conclusion.

\section{Related work}

A sizable body of work considers grid-based storage systems with uniform-sized loads.
\citet{gue2006very} introduces an algorithm to design the layout of a rectangular grid-based warehouse given a \emph{depth} bound--the number of objects that may be in the way of another object.
\citet{gue2007puzzle} presents the "Puzzle-Based Storage" (PBS) concept where items are stored on a high-density 2D grid, potentially with only one empty cell called an \emph{escort}. An item can only be moved to a neighboring escort cell, i.e., retrieval is facilitated by repeatedly moving escorts. \citet{gue2007puzzle}~empirically compare average retrieval distances of dense PBS warehouses to traditional aisle-based ones. %
Various extensions to the PBS concept have been proposed, using sequential or parallel motions to move items to/from designated input/output ports; see~\cite{bukchin2022comprehensive} and references within.
Another distinguishing aspect in PBS literature is the objective function considered, e.g., retrieval time or total distance traveled by items~\cite{bukchin2022comprehensive}.
Optimal retrieval policies for the sequential case have been derived for one~\cite{gue2007puzzle} and two escorts~\cite{kota2015retrieval}.
Related to warehouse layout design, \cite{JJ-WCS} investigates the maximization of the number of items stored in a graph-based environment such that all the items are accessible from a connected set of empty vertices.

\citet{carton-seq} study a closely related 2D grid-based load sorting system, with loads arriving in a random order and departing in a specified sequence. %
They present decentralized sorting policies, where each grid cell is an independent unit that can move a load.
As opposed to this work, they focus on empirical performance, have a different access type to the grid, namely, a single input and single output cell, and always reserve one row and one column only for motion.

Sometimes, loads arrive at storage with unknown departure order~\cite{DBLP:conf/esa/BrinkZ14}; items can be rearranged once the departure order is known, provided sufficient downtime time exists, to allow for fast retrieval later.
Finding such a rearrangement with the lowest cost is commonly studied as the Container Pre-marshalling Problem~\cite{caserta2020container}.
Alternatively, the Container Relocation Problem (also called Blocks Relocation Problem) asks to remove all the items in a given ordering while minimizing item relocations~\cite{brp-survey}.
The online variant of the Container Relocation Problem, where the items to be retrieved are revealed one a time, was studied by \citet{DBLP:journals/eor/ZehendnerFJ17}, where the competitive ratio and empirical performance of a leveling heuristic is analyzed.
A different extension is the Dynamic Container Relocation Problem (DCRP)~\cite{hakan2014mathematical}, which also considers the arrival of containers and assumes that both the arrival and departure times of all containers are known (allowing for arrivals and departures to be interleaved).
Although similar to our work in spirit, the aforementioned literature focuses on stacks of loads that are only accessible from the top via a crane, akin to shipyards.
Consequently, the arising minimization problems give rise to a different structure, as evident by their NP-hardness~\cite{brp-survey, caserta2020container}. %

Another closely related effort considers a train yard modeled by a tree whose root is the entrance/exit to the yard, with a known arrival and departure order of trains~\cite{train-ordering}.
They consider the problem of parking the trains as they arrive without relocating them for departure, showing that it is NP-hard, and provide novel approaches to determine feasibility. Here too, motion is more restricted than in our setting, since the dead-end tracks act similarly to LIFO stacks.  %

This work is partly inspired by \emph{mechanical search} research,
which is the problem of retrieving target objects whose position may not be known~\cite{dogar2014object-mech-search}.
Shelves are a closely related type of storage area, as they are also only accessible from one side and give rise to mechanical search due to reduced visibility; e.g., objects may occlude one another~\cite{bluction, huang2021mechanical}.
\citet{KGoptarr} consider an arrangement optimization problem for a grid-based shelf: given a set of objects with associated access costs and access frequencies, optimize the objects' initial placements to minimize access time.

\section{Problem Definition, Notation, Terminology}
\label{sec:defs}

Consider a rectangular $\rows \times \cols$ grid storage space \storage with \rows rows and \cols columns. Fix the orientation of \storage so that its bottom row, also called \emph{front} row, is the open side of \storage through which loads are stored and retrieved.
Denote the columns by $C_1, \ldots, C_{\cols}$ in a left to right order. The loads have distinct labels $1,\ldots, \nobjs$, with $\nobjs \le \rows\cols$. The \emph{density} of a storage space having $\nobjs$ loads is $\nobjs/(\rows \cols)$.
Each load occupies exactly one grid cell. An {\em arrangement} \arr of the loads is an injective mapping $\{1,\ldots,\nobjs\} \to \{1, \ldots, \rows\} \times \{1, \ldots, \cols\}$ that specifies a cell in \storage, i.e., an arrangement is specified by a pair (row, column) for each load. Two loads are \emph{adjacent} in a given arrangement if they are located in horizontally or vertically adjacent grid cells.

Each load can be moved by a robot only via a path of empty cells along the four cardinal directions (up, down, left, or right). Specifically, the following types of \emph{actions} are valid:
\begin{itemize}[leftmargin=4mm]
    \item Storage: A robot can \emph{store} a load in an empty cell $v$ in \storage via a path from any cell in the front row to $v$.

    \item Retrieval: A robot can \emph{retrieve} a load from \storage via a path from its current cell to any cell in the front row.

    \item Relocation: A robot can \emph{relocate} a load within \storage to an empty cell $v$ via a path from its current cell to $v$. The empty call $v$ and the original cell of load must be accessible from the front row. 
\end{itemize}

Loads must be all stored and then retrieved according to prescribed sequences. The departure sequence, i.e., the order in which loads are to be retrieved, is fixed to be ${\dep = (1, \ldots, \nobjs)}$ without loss of generality, as loads can always be relabeled. Denote the arrival sequence, i.e., the order in which loads must be stored, by ${\inc = (a_1, \ldots, a_{\nobjs})}$.
The term \emph{access paths} denotes the paths used to store/retrieve a load.

\smallskip

\noindent \textbf{Problem:} Storage and Retrieval without Relocations ({\tt StoRe$^2$}) \textbf{Input:} A storage area $\storage$ with \rows rows and \cols columns, and arrival and departure sequences \inc and \dep, respectively. \textbf{Question:} Is there a sequence of $2\nobjs$ actions that first stores all the loads per the sequence \inc and then retrieves them per the sequence \dep without any relocations?
\smallskip

We also consider the minimization version of the problem, where relocations may be necessary, where the goal is to minimize the total number of actions for the same input. 

\textbf{Minimization Variation:} Storage and Retrieval with Minimal Relocations ({\tt StoRMR})

In addition to minimizing relocation actions, this manuscript also examines the \emph{total distance} traveled by the loads carried by the robot, which is the sum of the lengths of the paths corresponding to the actions.

\textbf{Online variants.}
In practice, it may not be possible to know the full arrival sequence \inc in advance. Therefore the following two online settings are considered:
\begin{itemize}[leftmargin=4mm]
    \item
    \textbf{Online with lookahead \look}: Only the next \look arriving loads in \inc are known in advance. However, \dep is fully known, meaning the final position of each arriving load in the departure sequence is available.

    \item
    \textbf{Fully online}: Neither \inc nor \dep are known in advance, but the number of loads $n$ is known.

\end{itemize}

The presentation of the developed findings goes from the case of available information to reduced information requirements, i.e., (1) the offline setting first, (2) the online setting with lookahead and, finally, (3) the fully online setting.

\begin{figure}[tb]
    \centering    \includegraphics[width=\columnwidth]{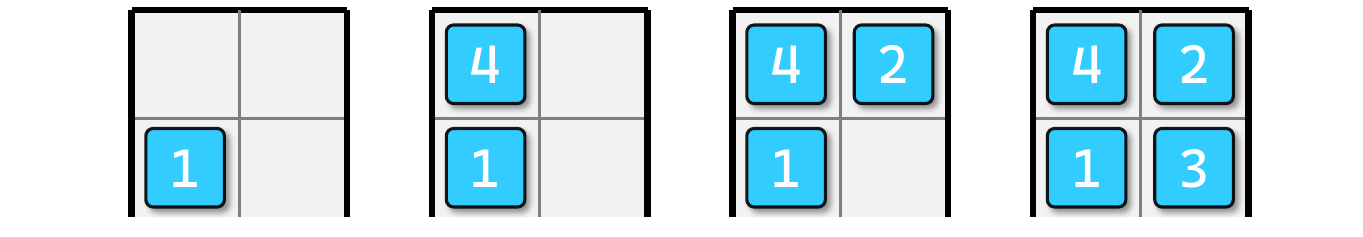}
    \caption{Consider an instance with $A = (1, 4, 2, 3)$ and ${D = (1, 2, 3, 4)}$. Given that load $1$ must depart first, it has to be stored in the front to avoid relocations. This forces the above-shown storage sequence (or its vertical mirror, where 1 is placed on the bottom right). This leaves load 2 buried behind loads 3 and 4. This means it cannot be retrieved without a rearrangement.}
    \label{fig:2x2}
\end{figure}

\section{The offline setting}
This section presents a complete characterization of the existence of rearrangement-free solutions, i.e., for the {\tt StoRe$^2$} problem, in the offline setting. 
In this setting, relocations may be necessary to support $100\%$ density for narrow grid openings:
\begin{observation}\label{o:1}
For a $2 \times 2$ storage space relocations may be necessary.
\end{observation}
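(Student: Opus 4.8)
The plan is to prove the observation by exhibiting the explicit instance of \Cref{fig:2x2}, namely the arrival sequence $\inc = (1,4,2,3)$ together with $\dep = (1,2,3,4)$ on the $2 \times 2$ grid, and showing that no arrangement realizable by this arrival sequence admits a relocation-free retrieval. Since $\nobjs = 4 = \rows\cols$, the grid is completely full the moment all storage actions are done, so I will repeatedly exploit that at the start of the retrieval phase there are no empty cells at all. The whole argument is a forced-placement chain: at each step I show the claimed cell is the \emph{only} option, and the final step produces a contradiction.

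First I would pin down where load $1$ must sit. Because $1$ departs first, at the instant of its retrieval the grid is still full; a load can only move through empty cells, so $1$ must already lie in the front row (otherwise it has no empty neighbor and cannot move at all). Using the left--right symmetry of the grid, I may assume $1$ occupies the front-left cell. Next I would constrain the last load to be stored: when the fourth load, which is $3$, is placed, exactly one cell is empty, so the only candidate access path is that single cell, forcing $3$ into the front row as well, i.e.\ the front-right cell. This fills the entire front row with $1$ and $3$ and pushes $2$ and $4$ into the back row.

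A retrieval argument for load $2$ then fixes the rest. Load $2$ departs second, immediately after $1$; once $1$ leaves, the sole empty cell is the front-left one, and a back-row load can reach it only from the back-left cell (the back-right cell is diagonal to it and hence not adjacent). Thus $2$ must occupy back-left and $4$ the back-right. Finally I would derive the contradiction during storage: in this forced arrangement $2$ must end up back-left, but in the arrival order $1,4,2,3$ the load $2$ is stored only after $1$ (front-left) and $4$ (back-right) are already in place; the only empty front cell is then front-right, which is not adjacent to the back-left target, so there is no empty path and $2$ cannot be stored without relocation. The symmetric case with $1$ in the front-right cell is identical.

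I do not expect any individual step to be hard; the only thing to be careful about is the exhaustiveness of the reasoning. Each placement must be shown to be genuinely \emph{forced} rather than merely one bad choice, so every branch has to be closed off, and each ``no empty path'' claim must be checked against the actual occupancy at that precise instant, since adjacency in a $2 \times 2$ grid is easy to misjudge (diagonal cells are not adjacent). Getting this bookkeeping exactly right is the entire content of the proof.
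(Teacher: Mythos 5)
Your proof is correct and rests on the same witness instance $A=(1,4,2,3)$ with $D=(1,2,3,4)$ and the same exhaustive forced-placement analysis as the paper; the only difference is the order of deductions (the paper traces the storage phase forward, forcing $4$ behind $1$ and leaving $2$ buried at the back of the right column, whereas you pin down the front row via the first-departure and last-storage constraints and surface the contradiction at the storage step for load $2$). The two arguments are equivalent in substance, and your care about closing the alternative branch (load $2$ at back-right fails at retrieval; load $2$ at back-left fails at storage) is exactly the bookkeeping the paper compresses into its ``acts like a stack'' remark.
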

\begin{proof}
    Consider the sequence $\inc = (1,4,2,3)$ and recall that the departing sequence is fixed w.l.o.g. to be $\dep = (1,2,3,4)$. Clearly, load 1 must be stored on the front row because it departs first, say w.l.o.g. on the left column (see Fig.~\ref{fig:2x2}). As load 4 arrives, we must place it behind load 1 to not block later arriving loads. This leaves an empty column, which acts like a stack, i.e., we must place load 2 in the back row and load 3 in the front. Finally, observe that the retrieval of load 2 requires relocating either of the loads 3 or 4.
\end{proof}

\begin{figure*}[ht] %
    \centering   
    \begin{overpic}
    [width=\linewidth]{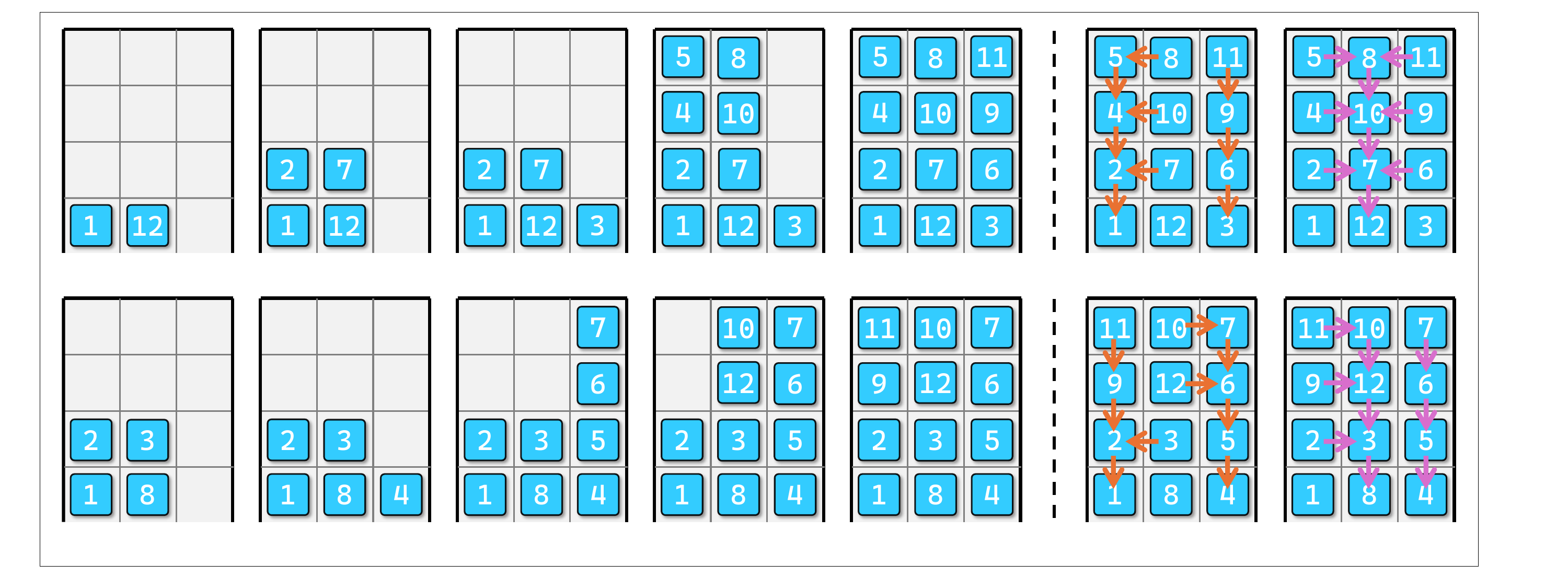}
    \put(6.4, 19.7){{\small ($a$)}}
    \put(20.1, 19.7){{\small ($b$)}}
    \put(34., 19.7){{\small ($c$)}}
    \put(47.4, 19.7){{\small ($d$)}}
    \put(61.3, 19.7){{\small ($e$)}}
    \put(77.4, 19.7){{\small ($f$)}}
    \put(91.6, 19.7){{\small ($g$)}}
    \put(6.4, 0.75){{\small ($h$)}}
    \put(20.1, 0.75){{\small ($i$)}}
    \put(34., 0.75){{\small ($j$)}}
    \put(47.4, 0.75){{\small ($k$)}}
    \put(61.3, 0.75){{\small ($\ell$)}}
    \put(77.4, 0.75){{\small ($m$)}}
    \put(91.6, 0.75){{\small ($n$)}}
    \end{overpic}
    \caption{Running Algorithm~1 on the inputs $\dep' = 
    (\underset{1}{12}, \underset{2}{7}, \underset{3}{3}, \underset{4}{1}, \underset{5}{10}, \underset{6}{8}, \underset{7}{9}, \underset{8}{11}, \underset{9}{6}, \underset{10}{4}, \underset{11}{2}, \underset{12}{5})$
    (top row) and  ${\dep' =
    (\underset{1}{8}, \underset{2}{3}, \underset{3}{4}, \underset{4}{5}, \underset{5}{6}, \underset{6}{2}, \underset{7}{7}, \underset{8}{12}, \underset{9}{1}, \underset{10}{10}, \underset{11}{9}, \underset{12}{11})}$
    (bottom row).
    \vspace{0.05in}\\
    \textbf{Top}: (a)(b) Partial solutions after the first two iterations, where non-equal matching loads of $D$ and $D'$ are put in the left two columns $C_1$ and $C_2$. (c) In the third interaction, $D$ and $D'$ have an equal matching load 3, which is put in $C_3$. (d) The next two iterations fill up columns $C_1$ and $C_2$. This leads to the solution following case 1 from here on. (e) The leftover loads in $C_3$ are filled up to complete the arrangement. The solution with arrows showing the guaranteed local adjacencies for $[12]$ and $\dep'$ is shown in (f) and (g), respectively.
    \vspace{0.05in}\\
    \textbf{Bottom}: (h)(i) A partial solution after two and three iterations. (j) The next three iterations fill up $C_3$, leading the algorithm into case 2. (k) The next two iterations fill up $C_2$ with loads 10 and 12. (l) $C_1$ is filled up to complete the arrangement. (m) and (n) show the solution with local adjacency as in the top row. Notice that the arrows indicate guaranteed paths for retrieval (in (f) and (m)) and for storage (in (g) and (n), when reversed. Better paths requiring fewer sideway-move actions may exist.
    } 
    \label{fig:alg_ex}
\end{figure*}

The observation extends to any $\rows \times 2$ case where $\rows \ge 3$. 

An arrangement \arr is defined to \emph{satisfy} an arrival (resp. departure) sequence \inc  (resp. \dep), if all the loads can be stored (resp. retrieved) according to sequence \inc (resp. \dep) with one action per load, where \arr is the final (resp. initial) arrangement. Fix ${\dep = [n]}$, where $[\nobjs] \coloneqq (1, \ldots, \nobjs)$, leaving $\inc$ as the sole sequence that can change per problem instance. %

\begin{observation} \label{obs:reverse}
An arrangement \arr satisfies an arrival sequence \inc if and only if \arr satisfies the departure sequence in which \inc is reversed.
\end{observation}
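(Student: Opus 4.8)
The plan is to exploit the time-reversal symmetry between storing and retrieving: a legal storage process and a legal retrieval process are mirror images of each other, so reversing time turns one into the other, reversing both the order in which loads are handled and every access path. Write $\inc = (a_1, \ldots, a_{\nobjs})$ and let $\inc^{R} = (a_{\nobjs}, \ldots, a_1)$ denote its reversal. Since reversal is an involution, it suffices to prove a single implication, say that if $\arr$ satisfies the arrival sequence $\inc$ then $\arr$ satisfies the departure sequence $\inc^{R}$; the converse is obtained by applying the same argument with $\inc^{R}$ in place of $\inc$.

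First I would set up an explicit correspondence between the two processes. In a storage process realizing $\inc$ with final arrangement $\arr$, record for each $i$ the access path $\pi_i$ used to store $a_i$. At the instant $a_i$ is stored, the occupied cells are exactly those that $\arr$ assigns to $a_1, \ldots, a_{i-1}$, and $\pi_i$ runs from some front-row cell, through currently empty cells, to the (empty) cell of $a_i$. The candidate retrieval process for $\inc^{R}$ then starts from $\arr$, removes the loads in the order $a_{\nobjs}, a_{\nobjs-1}, \ldots, a_1$, and retrieves each $a_i$ along the reverse of $\pi_i$, traversed from $a_i$'s cell toward the front row. The heart of the argument is to verify legality: when $a_i$ is retrieved it is the $(\nobjs - i + 1)$-st load removed, so $a_{\nobjs}, \ldots, a_{i+1}$ are already gone and the occupied cells are precisely those of $a_1, \ldots, a_i$; the retrieval path begins at $a_i$'s own cell and must otherwise avoid $\{a_1, \ldots, a_{i-1}\}$, which is exactly the obstacle set that $\pi_i$ had to avoid during storage. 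Since grid paths are reversible and both endpoints (a front-row cell and the cell of $a_i$) coincide up to orientation, the reversed $\pi_i$ is a valid retrieval path, and each load is handled with a single action.

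The subtlety I expect to need the most care with is the bookkeeping of the cell of the load currently being moved: during storage it is empty just before the action and filled just after, whereas during retrieval it is filled just before and empty just after, so a naive comparison of ``empty cells'' in the two processes looks mismatched. The clean way to dispose of this is to phrase the path-feasibility condition uniformly in both directions: treat the moving load's cell as a free endpoint and observe that the genuine obstacle set for $\pi_i$ is $\{a_1, \ldots, a_{i-1}\}$ in both the storage and the reverse-retrieval step. With this framing the two feasibility conditions become literally the same constraint, and the equivalence follows immediately.
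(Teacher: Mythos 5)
Your proposal is correct and rests on exactly the same idea as the paper's proof: reversing time turns each storage action into a retrieval action along the reversed access path, with the obstacle set at each step unchanged. The paper states this in one sentence; you have simply carried out the bookkeeping explicitly.
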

\begin{proof}
    All the actions and corresponding access paths that bring the items into the arrangement \arr in the order \inc can be applied in reverse to retrieve the items in the reverse order, and vice versa.
\end{proof}

Following \Cref{obs:reverse}, it suffices to treat {\tt StoRE$^2$} as the equivalent problem of finding an arrangement \arr that satisfies two departure sequences, namely the true departure order $[n]$ as well as a permutation $\dep'$ of $[n]$, where $\dep'$ is the reverse of $\inc$ in the original input. For example, given $A = (1, 4, 2, 3)$ and $D = (1, 2, 3, 4)$ as in Fig.~\ref{fig:2x2}, the two departure orders to be satisfied are the original one $D = (1, 2, 3, 4)$ and $D' = (3, 2, 4, 1)$, which is the inverse of $A$. 

Consequently, it suffices to check the following \textit{local adjacency conditions} to determine whether a given arrangement \arr satisfies a departure sequence $\dep'$:
\begin{observation}
\label{obs:local}
    An arrangement \arr satisfies a departure sequence $\dep'=(d_1, \ldots, d_{\nobjs})$ if and only if every load $d_i$ is either in the bottom row or is adjacent in \arr to an load $d_j$ that departs earlier, i.e., $j < i$.
\end{observation}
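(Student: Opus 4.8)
The plan is to prove the two directions separately, with the crucial structural fact being that during the retrieval phase the set of empty cells is monotonically nondecreasing: since satisfying $\dep'$ uses exactly one retrieval action per load and no relocations, a cell can only pass from occupied to empty (when its load departs), never the reverse. I would track, for each step $i$, the set $E_i$ of cells that are empty immediately before $d_i$ is retrieved. Monotonicity then gives $E_j \subseteq E_i$ whenever $j \le i$, and in a packed grid $E_i$ is exactly the set of cells vacated by the already-departed loads $d_1, \ldots, d_{i-1}$. These two facts are the engine of both directions.

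For the forward (``only if'') direction I argue by necessity at the moment $d_i$ is retrieved. If $d_i$ is not in the front (bottom) row, then its retrieval path must leave $d_i$'s cell through an adjacent cell that is empty at step $i$, i.e.\ a cell in $E_i$. In a packed grid every cell of $E_i$ is the former location of some $d_j$ with $j < i$, so this neighbor witnesses that $d_i$ is adjacent in $\A$ to a load departing earlier, which is exactly the claimed condition.

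For the reverse (``if'') direction I would use induction on $i$, proving the stronger statement that $d_1, \ldots, d_n$ can be retrieved in order and that at step $i$ there is a clearing path $P_i$ from $d_i$'s cell to the front row using only cells of $E_i \cup \{d_i\text{'s cell}\}$. The base case $d_1$ must lie in the front row (it has no earlier-departing load), so it is retrievable directly. For the inductive step, if $d_i$ is in the front row it is again retrievable directly; otherwise the hypothesis supplies an earlier-departing neighbor $d_j$, whose cell lies in $E_i$ and was the start of a front-reaching path $P_j$ valid at step $j$. The key observation is that $P_j$ avoids $d_i$'s cell (which was still occupied at the earlier step $j < i$) and uses only cells of $E_j \subseteq E_i$, so $P_j$ remains clear at step $i$; prepending the single step from $d_i$'s cell into $d_j$'s vacated cell then yields the desired $P_i$.

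The main obstacle — and the step I would handle most carefully — is the reverse direction's need to exhibit an entire path to the front rather than merely a single empty neighbor: adjacency to $d_j$ only hands us one vacated cell, and I must argue that the remainder of $d_j$'s old evacuation route is still unobstructed at the later time $i$. This is precisely where monotonicity of $E_i$ and the fact that $d_i \notin P_j$ do the work, so I would state and invoke both explicitly. I would also flag that necessity in the forward direction relies on the grid being packed (no initially-empty cells): were spare empty cells present, a load could escape through them without touching any earlier-departing load, and the clean ``adjacent to an earlier-departing load'' criterion is the correct one exactly in the full-capacity regime the paper targets.
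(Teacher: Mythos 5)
Your proof is correct. The paper actually states Observation~\ref{obs:local} without any proof, so there is nothing to compare against directly; your argument---monotonicity of the empty-cell set during a relocation-free retrieval phase for the ``only if'' direction, and an induction that splices the single step into $d_j$'s vacated cell onto $d_j$'s old (still-clear) evacuation path $P_j$ for the ``if'' direction---is the natural formalization of what the paper leaves implicit, and the inductive step correctly handles the one nontrivial point, namely that adjacency to an earlier-departing load must be upgraded to a full path to the front row. Your caveat about necessity requiring a packed grid is also well taken: as literally stated the ``only if'' direction can fail when spare empty cells exist, and the paper only invokes the observation after explicitly assuming $100\%$ density, so flagging that hypothesis is appropriate rather than a defect of your argument.
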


Assume the worst case of a maximum density of $100\%$ (otherwise, the situation is simpler). Let $\dep'$ be an input permutation of $[\nobjs]$. The following algorithm constructs a valid arrangement \arr by iteratively assigning placements bottom-up, ensuring the local adjacency conditions of \Cref{obs:local} are met for sequences $[n]$ and $\dep'$.     Recall that $C_1,C_2,C_3$ are the columns in left-to-right order.
    
\textbf{Algorithm 1.} In the first stage, the algorithm iterates jointly over $[n]$ and $\dep'$ in order and repeats the following: Let $x$ and $y$ be the first two items in $[n]$ and $\dep'$, respectively, which have not been assigned a cell. %
If $x \neq y$, assign $x$ and $y$ to the lowest unassigned cells of $C_1$ and $C_2$, respectively. Otherwise, if $x=y$, assign $x$ to the lowest unassigned cell in $C_3$. Repeat this process until one (or more) of the columns becomes fully assigned, at which point we proceed to the next stage, which has two cases:

\emph{Case 1:} If $C_1$ and $C_2$ are fully assigned (notice that since we assign items to them together, they become fully assigned together), we continue filling up $C_3$ (bottom-up) by adding the remaining unassigned items of $[n]$ (in order).

\emph{Case 2:} Otherwise, $C_3$ becomes fully assigned first. In this case, we fill in $C_2$ (bottom-up) with the remaining items of $\dep'$ until $C_2$ is filled up. Then, we fill $C_1$ similarly with the remaining items of $[n]$.

\begin{theorem} \label{thm:3col}
For an $\rows \times 3$ storage area \storage, $\rows \ge 1$, there is a solution that avoids relocations.
The solution can be found in $O(\nobjs)$ time, where $\nobjs$ is the number of loads.
\end{theorem}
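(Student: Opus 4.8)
The plan is to show that Algorithm~1 outputs an arrangement \arr that meets the two \emph{local adjacency conditions} of \Cref{obs:local} simultaneously, one for the true departure order $[\nobjs]$ and one for $\dep'$ (the reverse of \inc); by \Cref{obs:reverse} this is precisely what a relocation-free solution requires. I would first fix the full-capacity case $\nobjs = 3\rows$, which is the hard one, and note that the general case $\nobjs \le 3\rows$ reduces to it (e.g.\ by appending phantom loads that fill the remaining cells and are both stored and retrieved after all real loads; the extra empty space only enlarges the free region used for access, so the sufficiency direction of \Cref{obs:local} still applies to the real loads). The $O(\nobjs)$ running time is then immediate: keeping one pointer advancing through $[\nobjs]$ and one through $\dep'$, each skipping already-assigned loads, gives $O(1)$ amortized work per placement, and every load is placed exactly once.

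The core of the argument is a short list of monotonicity invariants, read bottom-to-top: $C_1$ is increasing in load label, $C_2$ is increasing in $\dep'$-position, and $C_3$ is increasing in load label. These hold because each first-stage step places the smallest unassigned label as $x$ (into $C_1$ or $C_3$) and the $\dep'$-earliest unassigned load as $y$ (into $C_2$ or $C_3$): since we only ever remove loads, the smallest unassigned label and the $\dep'$-earliest unassigned position are non-decreasing, so the successive $x$'s strictly increase in label and the successive $y$'s strictly increase in $\dep'$-position. I also need the bookkeeping fact that, when a column is completed, the still-unassigned labels are exactly a top segment $\{x_k+1,\dots,\nobjs\}$, where $x_k$ is the last $x$ (any label $\le x_k$ would have been taken as some earlier $x$); this keeps the second-stage fills monotone too. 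Granting these invariants, the ``vertical'' part of both conditions is free: every non-front $C_1$ and $C_3$ load sits above a smaller label, and every non-front $C_2$ load sits above a $\dep'$-earlier load.

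The substance lies in the ``horizontal'' witnesses for the loads not covered vertically, and this is where I expect the main obstacle. For the $[\nobjs]$-condition these are the $C_2$ loads: a first-stage entry $y=C_2[\rho]$ is paired with $x=C_1[\rho]$ and satisfies $y>x$ (as $x$ is the smallest unassigned label and $y\neq x$ is also unassigned), so its left neighbor is smaller; a second-stage $C_2$ entry (only in Case~2) lies in $\{x_k+1,\dots,\nobjs\}$ and hence exceeds every label of the already-full $C_3$, so its right neighbor is smaller. For the $\dep'$-condition the uncovered loads are in $C_1$ and in the second stage of $C_3$. A first-stage $C_1[\rho]=x$ is paired with $y=C_2[\rho]$, the $\dep'$-earliest unassigned load, which therefore precedes $x$ in $\dep'$: a $\dep'$-earlier right neighbor. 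The two remaining families rest on one key lemma: when $C_3[\rho]$ is a second-stage load (Case~1), its left neighbor $C_2[\rho]$ is $\dep'$-earlier. The proof is that $C_2[\rho]$ is filled at the $\rho$-th ``unequal'' first-stage step, at which moment the second-stage value $C_3[\rho]\in\{x_k+1,\dots,\nobjs\}$ is still unassigned; since $C_2[\rho]$ was chosen as the $\dep'$-earliest unassigned load then, it must precede $C_3[\rho]$ in $\dep'$. The mirror statement disposes of second-stage $C_1$ loads in Case~2 (there $C_2$ skims the $\dep'$-earliest leftovers of $\{x_k+1,\dots,\nobjs\}$ and $C_1$ gets the $\dep'$-later ones, so the right neighbor $C_2[\rho]$ is again $\dep'$-earlier); \Cref{fig:alg_ex} illustrates both cases.

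Assembling these, every load is either in the front row or has a neighbor smaller in label (for $[\nobjs]$) and a neighbor that is $\dep'$-earlier (for $\dep'$), so \Cref{obs:local} certifies that \arr satisfies both sequences and the solution is relocation-free. I would finish by observing that every witnessing neighbor is either directly below or in an adjacent column at the same row, which is exactly what makes each access path column-adjacent. The point I would check most carefully is the bookkeeping that completing a column leaves the unplaced labels equal to a top segment $\{x_k+1,\dots,\nobjs\}$, together with the exact second-stage fill order ($C_2$ before $C_1$ in Case~2, remaining $[\nobjs]$ into $C_3$ in Case~1), since both the vertical monotonicity and the key lemma depend on it.
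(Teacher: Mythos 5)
Your proposal follows essentially the same route as the paper: run Algorithm~1 and certify the local adjacency conditions of \Cref{obs:local} simultaneously for $[\nobjs]$ and $\dep'$, column by column and case by case; the witnesses you exhibit (below-neighbors via monotonicity, the paired left/right neighbors from the first stage, and your key lemma for the second-stage loads) are exactly the ones the paper's proof uses, just spelled out in more detail. One correction to the invariant you yourself flag as the point to check: the claim that completing a column leaves the unassigned labels \emph{equal} to the top segment $\{x_k+1,\dots,\nobjs\}$ is false --- the $y$'s placed into $C_2$ can be arbitrarily large labels (e.g.\ with $\dep'=(\nobjs,\nobjs-1,\dots,1)$ the very first step assigns label $\nobjs$), so the unassigned set is only \emph{contained in} that segment. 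Fortunately, every place you invoke the fact uses only the containment (second-stage loads have labels exceeding $x_k$, hence exceeding every first-stage label, and were unassigned throughout the first stage), which your parenthetical already establishes; weakening ``exactly'' to ``contained in'' repairs the statement without disturbing the argument. A minor further caveat: your closing inference that adjacency of each witness to its predecessor makes every access path column-adjacent does not follow --- a second-stage $C_1$ load in Case~2 is guaranteed only a path with two lateral steps ($C_1\to C_2\to C_3$, then down $C_3$) --- but \Cref{thm:3col} makes no column-adjacency claim, so this does not affect correctness.
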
 
    
\begin{proof} The proof is based on the above algorithm. It verifies that the local adjacency conditions hold for each non-bottom item, when it is assigned a cell in \arr. 

For the first phase and any item $x$ in $C_1$, the adjacent item $z$ below it (resp. $y$ to its right) appears before $x$ in $[n]$ (resp. in  $\dep'$), which follows from the order of placement. Thus, the adjacency conditions hold for $x$. A similar argument follows for $C_2$. Finally, notice that for any item $w$ in $C_3$, the adjacent item $z$ below it must appear before $w$ in both $[n]$ and $\dep'$.
    
Let us examine the second phase:

\emph{Case 1:} For an item $w$ placed in this phase, the adjacent item $z$ on its left (resp. $z'$ below it) appears before $w$ in $\dep'$ (resp. in  $[n]$).

\emph{Case 2:} %
For an item $w$ added to $C_2$, the adjacent item $z$ on its right appears before $w$ in both $[n]$ and $\dep'$. %
When an item $w$ is added to $C_1$, the adjacent item $z$ on its right (resp. $z'$ below it) appears before $w$ in $\dep'$ (resp. in  $[n]$).

Finally, it is straightforward to verify that the algorithm runs in $O(n)$ time.    
\end{proof}

We can now use \Cref{thm:3col} as a building block to obtain a solution without relocation for storage spaces with more than three columns. The proof of \Cref{thm:char} will be provided when we prove \Cref{thm:general-full}, a more general result. 

\begin{theorem}\label{thm:char}
For an $\rows \times \cols$ storage area \storage with $\rows > 1$, there is a solution that avoids relocations if and only if $\cols \ge 3$.
\end{theorem}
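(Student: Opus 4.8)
The statement is a biconditional and I would establish the two implications separately. The necessity direction---that a relocation-free solution can fail to exist once $\cols \le 2$---is essentially already available. For $\cols = 2$ it is exactly \Cref{o:1} together with the stated extension to $\rows \times 2$ grids with $\rows \ge 3$. For $\cols = 1$ the single column, being accessible only from the front, behaves as a \textsc{lifo} stack: loads must be top-packed as they arrive, so the only relocation-free retrieval order is the reverse of the arrival order. Hence the instance $\inc = (1,2)$, $\dep = (1,2)$ (which fits because $\rows > 1$) already forces a relocation, since here $\dep$ is not the reverse of $\inc$. Thus ``relocations can always be avoided'' implies $\cols \ge 3$, and all the real work lies in the converse.

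For the sufficiency direction, the case $\cols = 3$ is precisely \Cref{thm:3col}. For $\cols \ge 4$ my plan is to reduce to the three-column construction by \emph{tiling}: partition the $\cols$ columns into vertical blocks of consecutive columns, each block having three columns (and, when $\cols$ is not a multiple of three, one block of four or five columns). I would then split the $\nobjs$ loads among the blocks arbitrarily, subject only to each block receiving at most as many loads as it has cells; this is possible because the blocks' capacities sum to $\rows\cols \ge \nobjs$. Within each block I run Algorithm~1 (or its small-width variant) on the loads assigned to that block, using the orders induced by $[\nobjs]$ and $\dep'$ on those loads.

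The point that makes the tiling correct is that the local adjacency conditions are hereditary under restriction of the orders. By \Cref{obs:local}, an arrangement satisfies a departure sequence iff every non-front load is adjacent to an earlier-departing load. If a block internally satisfies its two induced orders, then each of its loads has a supporting neighbor \emph{within the same block}, and being earlier in a restricted order implies being earlier in the corresponding global order; consequently the union of the per-block arrangements satisfies both $[\nobjs]$ and $\dep'$ globally, which by \Cref{obs:reverse} is exactly a relocation-free solution. Because each block's access paths are column-adjacent and never leave that block's columns, the blocks do not interfere and can be handled independently.

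I expect the main obstacle to be the small non-divisible cases rather than the tiling itself. Since $5$ is the only integer at least $3$ that is not a sum of $3$'s and $4$'s, the reduction leaves exactly the blocks of three, four, and five columns to be settled; the three-column block is \Cref{thm:3col}, while the four- and five-column blocks need a mild generalization of Algorithm~1---pairing an $[\nobjs]$-ordered column with a $\dep'$-ordered column and routing the ``matched'' loads into the remaining middle column(s)---which is the unified construction furnished by \Cref{thm:general-full}. As in the proof of \Cref{thm:3col}, I would verify case by case that every non-front load placed in these small blocks indeed acquires an earlier-departing neighbor with respect to both orders.
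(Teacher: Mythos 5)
Your necessity direction is fine and essentially matches the paper's (\Cref{o:1} and its stated extension cover $\cols = 2$; the LIFO argument for $\cols = 1$ is correct and is left implicit in the paper). For sufficiency the paper takes a different and more uniform route: it derives \Cref{thm:char} from \Cref{thm:general-full}, whose construction sweeps left to right, filling each of the $\cols - 3$ leftmost columns with the next $\rows$ arrivals of $\inc$ sorted by departure order (earliest-departing in front), so that each such column is storable via the still-empty column to its right and retrievable straight down, and only then runs Algorithm~1 on the last three columns. Your alternative---tiling into vertical blocks and arguing via \Cref{obs:local} that the local adjacency conditions are hereditary under restriction of the two orders---is sound as far as it goes: a block-internal supporting neighbor is a global supporting neighbor, and the access paths of a three-column block never leave its columns, so blocks really are independent. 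For $\cols \equiv 0 \pmod 3$ this yields a complete and genuinely different proof.

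The gap is in the residue cases. Your reduction leaves exactly one block of four or five columns, and the proposal never actually constructs a relocation-free arrangement for it. The sketched ``pairing'' generalization of Algorithm~1 is underspecified, and its natural reading fails: the phase-1 argument of \Cref{thm:3col} needs the $[\nobjs]$-ordered column and the $\dep'$-ordered column to be horizontally adjacent (the supporting neighbor of an item in $C_1$ with respect to $\dep'$ is the item immediately to its right), so reserving two or three middle columns for matched loads either destroys that adjacency or forces phase~2 to fill several leftover columns, where the existing case analysis does not apply verbatim. Falling back on \Cref{thm:general-full} is circular here: that theorem is precisely what the paper proves in order to prove \Cref{thm:char}, and if you may cite it you should cite it for the whole grid and discard the tiling. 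To close the gap on your own terms, handle the one or two surplus columns of the irregular block exactly as the paper does---one departure-sorted batch of $\rows$ consecutive arrivals per column---which reduces that block to a three-column instance of \Cref{thm:3col}; at that point, though, the tiling is no longer doing any work beyond the paper's left-to-right sweep.
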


\section{Online setting with lookahead}

This section analyzes the variant where the position of each arriving load in the departure sequence \dep is known but only a limited lookahead is available for the arrival sequence \inc. Practical considerations for the presented storage placement strategies are also discussed.

Unlike the offline case of \Cref{thm:3col}, there is no longer access to the reversed version of $\inc$, so the following analysis works with the original arrival sequence $\inc$. Consider now a generalization of \Cref{thm:3col} and the proof of Theorem~\ref{thm:char} via the following more general statement. 

\begin{theorem} \label{thm:general-full}
For an \( \rows \times \cols \), \( \cols \ge 3\) workspace, with a lookahead of \( \ell = 3\rows -1\), loads can be stored up to the maximum capacity without relocations in $O(\nobjs \log \rows)$ time.
Furthermore, this can be done with column-adjacent access paths.
\end{theorem}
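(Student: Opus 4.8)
The plan is to reduce the statement to an online, incremental construction of an Algorithm~1-style arrangement and then to bound the lookahead needed to drive it. First I would record two routing facts that make \emph{column-adjacency} free. During storage, if at least one column is kept entirely empty, any arriving load can be placed at any target cell by going straight up that empty column and taking at most one lateral step into the target; dually, in an arrangement produced by \Cref{thm:3col} every load can be retrieved by a straight descent in its own column after one optional lateral step into the label-sorted neighbor column to its left, whose cells below the load have strictly smaller labels and have therefore already departed under $\dep=[\nobjs]$. Hence, by \Cref{obs:local}, the task collapses to the following: construct, as arrivals are revealed, an arrangement satisfying $[\nobjs]$ of exactly the form produced by Algorithm~1 ($C_1$ label-sorted, $C_2$ ordered by arrival, $C_3$ the equal-match column), while at each storage step an empty access column (or its paired-column substitute) remains available.

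The second step is to run Algorithm~1 \emph{online}, in its mirrored form that fills the three active columns back-to-front and is driven by the arrival order on the $C_2$ side and by the known departure order $[\nobjs]$ on the $C_1$ side. The genuinely new constraint is that a load may be placed only after it has arrived: the $C_1$ side always wants the next load dictated by $[\nobjs]$, which need not be present yet, whereas the $C_2$ side only ever asks for the next arrival and is therefore always serviceable. The key quantitative claim I would prove is that, because the three active columns hold only $3\rows$ cells and one of them is filled at the current step, the $C_1$-dictated load needed next always occurs among the next $3\rows-1$ arrivals; a lookahead of $\ell = 3\rows-1$ therefore guarantees it is visible, so we can reserve its $C_1$ cell and keep placing intervening arrivals via the access column without ever stalling or being forced to relocate. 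The three-column pairing is precisely what lets the final block be completed without a dedicated empty access column, since the two paired columns supply each other with the side-adjacency and straight-up paths that storage and retrieval require.

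For the running time I would maintain, for each of the (at most three) active columns, a balanced search structure of size $O(\rows)$ that returns a load's rank within its column, hence its row, and reports the next $[\nobjs]$-required load; each arrival then costs $O(\log \rows)$, for $O(\nobjs\log\rows)$ overall. I expect the main obstacle to be the correctness of the lookahead bound together with the boundary analysis: showing that the $3\rows-1$ window always contains the next $C_1$-required load, that an empty access column (or its paired substitute) is preserved at every storage step, and that the handoff into the last three-column block lands every remaining load in a position consistent with \Cref{obs:local}, so that the committed placements never force a relocation and all access paths stay column-adjacent.
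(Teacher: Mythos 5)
Your proposal hinges on a quantitative claim that is false for $\cols > 3$: that ``the $C_1$-dictated load needed next always occurs among the next $3\rows-1$ arrivals.'' The $C_1$ side of Algorithm~1 is driven by the departure order $[\nobjs]$, so at the outset it wants load $1$; but load $1$ may be the \emph{last} arrival ($a_{\nobjs}=1$), i.e., $\nobjs-1 = \rows\cols-1 \gg 3\rows-1$ steps away. No bounded lookahead lets you see it, and ``reserving its cell'' while placing the intervening $\rows\cols-1$ loads elsewhere is exactly the problem you have not solved. Relatedly, Algorithm~1 consumes $\dep'$, the \emph{reverse} of $\inc$: the first item it assigns to $C_2$ is the last arrival, and whether a given load lands in $C_1$, $C_2$, or $C_3$ (and in which row) depends on the interleaving and on which items get skipped as already assigned --- information determined by the entire remaining sequence. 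So an ``online, mirrored'' Algorithm~1 over sliding triples of active columns is not well defined by what you have written, and your proof never specifies how the first $\cols-3$ columns are filled at all, which is where the actual difficulty of the theorem lives.

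The paper's proof avoids all of this by decoupling the columns from the global departure order. For each of the first $\cols-3$ columns it takes the next \emph{batch of $\rows$ consecutive arrivals} (lookahead $\rows$ suffices), sorts that batch by departure order, and stacks it so the earliest-departing member of the batch sits at the bottom; \Cref{obs:local} only requires each load to sit above a load departing earlier, so it is irrelevant that loads in other columns depart in between, and retrieval is a straight descent while storage uses the still-empty column to the right (hence column-adjacent). Only the final three columns invoke Algorithm~1, and there the lookahead $3\rows-1$ reveals all $3\rows$ remaining loads (the last one by elimination), so that subproblem is genuinely offline and no incremental version of Algorithm~1 is needed. Your running-time bookkeeping and the routing observations are fine in spirit (modulo the overclaim that an empty column reaches ``any'' cell rather than only cells in or adjacent to it), but without the batch-and-sort first phase the argument does not establish the theorem.
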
\vspace{-0.1in}
\begin{proof}
Assume a density of 1; otherwise, the situation is simpler.
The argument proceeds in two stages.
First, iterate over the $\cols - 3$ leftmost columns in left-to-right order, filling up a column at a time as follows (see \Cref{fig:t34} for an example): 
Let $C$ denote the current column, and let ${s = (a_{(1)}, a_{(2)}, \ldots, a_{(\rows)})}$ denote the next $\rows$ loads in $\inc$ sorted per departure order.
Notice that we can determine $s$ using a lookahead of $\rows < \ell$.
We proceed to place each load $a_{(i)}$ in the cell of column $C$ located on the $(i)$-th row away from the front row. %
At this stage, notice that each cell in $C$ is accessible from the next column to the right, which is empty. Consequently, each storage path is column adjacent. The placement also ensures that each load can be retrieved directly using only $C$, as the load below it departs first.

After the first stage, we apply the algorithm from \Cref{thm:3col} to the remaining three columns and inherit the theorem's properties. A lookahead of $3\rows -1$ suffices as the last arriving load can be easily deduced.

Finally, the running time follows from sorting $\rows$ loads at a time for at most $\nobjs/\rows$ batches.
\end{proof}

\begin{figure}[b!] %
    \centering
    \begin{overpic}[width=\columnwidth]{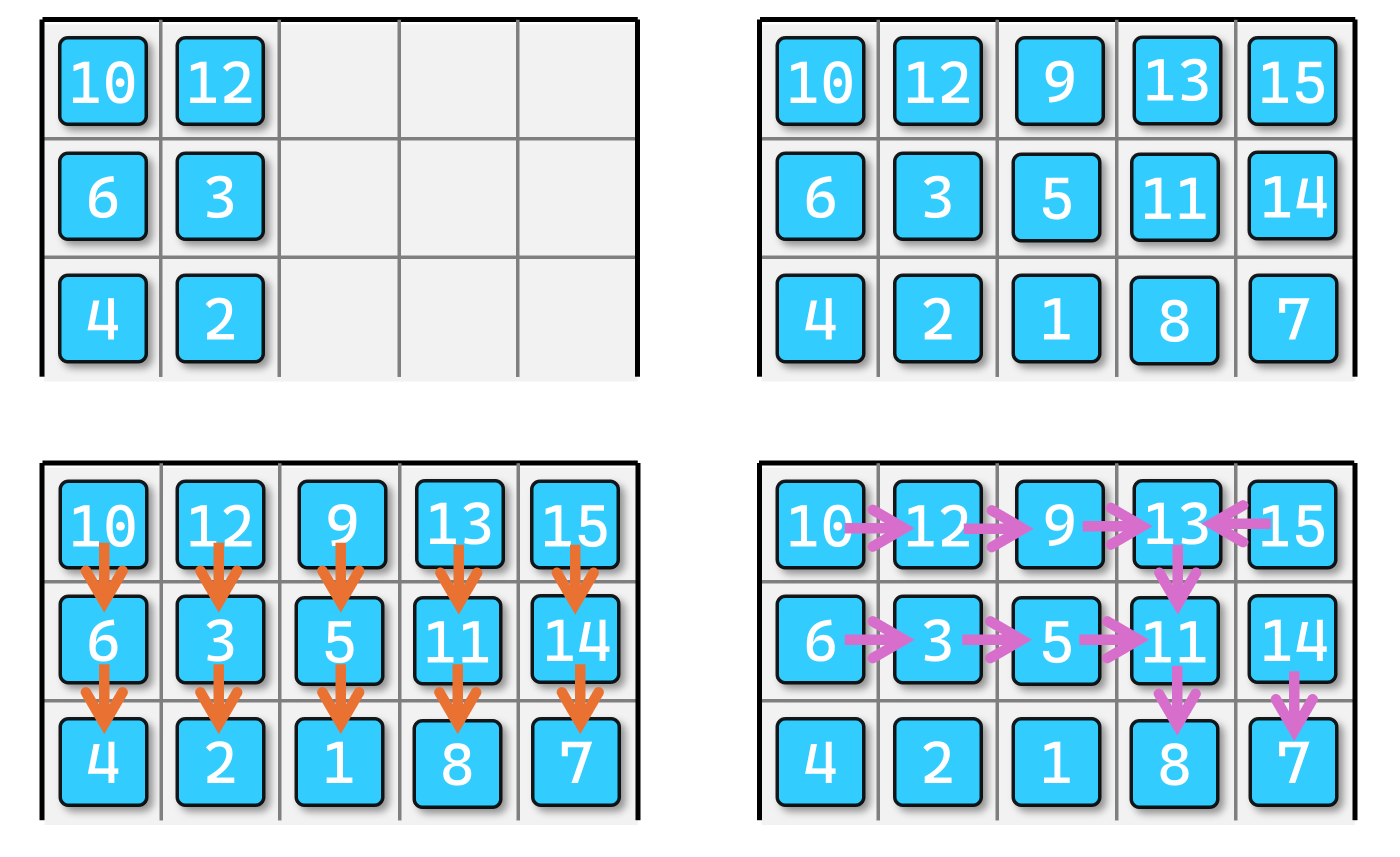}
    \put(22, 30.8){{\small $(a)$}}
    \put(73, 30.8){{\small $(b)$}}
    \put(22, 0){{\small $(c)$}}
    \put(73, 0){{\small $(d)$}}
    \end{overpic}
    \vspace{-3mm}
    \caption{An example execution of the algorithm described in Theorem~\ref{thm:general-full}, which also works for proving Theorem~\ref{thm:char}. $A = (4, 10, 6, 12, 2, 3, 9, 15, 1, 14, 13, 7, 5, 11, 8)$ and $D=[15]$. (a)~Knowing the first three loads to arrive are $4, 10, 6$, we store them in the order they depart, which is $4, 6, 10$ from bottom to top. These go to $C_1$.
    Similarly, the next three arrivals are stored in $C_2$ as $2, 3, 12$, from bottom to top. (b) For the next $9$ arrivals, we run the algorithm from Theorem~\ref{thm:3col}.
    For this, we turn the remaining loads of $A$ backward to get $D'=(8, 11, 5, 7, 13, 14, 1, 15, 9)$ and the corresponding portion of $D$ is $(1, 5, 7, 8, 9, 11, 13, 14, 15)$.
    (c)(d) Arrows showing how departures and arrivals can be handled without rearrangements, respectively. Note that the arrows for arrivals are drawn backwards to be consistent with Fig.~\ref{fig:alg_ex}. 
    }
    \label{fig:t34}
    \vspace{-.15in}
\end{figure}

The proof of Theorem~\ref{thm:general-full} makes it clear that for a known departure order, it is possible to fill up columns as loads arrive without much lookahead, for most of the storage space, while guaranteeing no rearrangement is needed during the retrieval phase. Proposition~\ref{prop:fill-upto-last-col} formalizes this observation.

\newpage
\begin{prop} \label{prop:fill-upto-last-col} %
For an \( \rows \times \cols \) workspace, ${\rows(\cols - 1) + 1}$ (or fewer) loads can always be stored without any relocations and with a lookahead of 1.
Furthermore, this can be done with column-adjacent access paths.
\end{prop}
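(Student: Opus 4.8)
The plan is to aim for the obvious extremal target arrangement and read off retrieval for free, then spend the real effort on online storage. Since $\rows(\cols-1)+1$ loads occupy exactly $\cols-1$ full columns plus one extra cell, the natural goal arrangement \arr has columns $C_1,\dots,C_{\cols-1}$ each completely full and sorted front-to-back by departure order (earliest-departing load in the front cell), together with a single ``spare'' load placed in the front cell of the last column $C_\cols$, leaving the remaining $\rows-1$ cells of $C_\cols$ empty. Retrieval without relocation is then immediate from \Cref{obs:local}: in each full column the front load departs before everything behind it, and the lone load in $C_\cols$ sits in the front row; retrieval proceeds by always pulling the current earliest-departing load straight out of the front of its column. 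The only work is therefore to \emph{build} \arr online, with a lookahead of $1$, using column-adjacent access paths.

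For storage I would process columns in a left-to-right sweep, keeping as the ``active'' column the leftmost not-yet-full one, say $C_k$. The point of the sweep is that the right neighbor $C_{k+1}$ is still empty while $C_k$ is active, so going up $C_{k+1}$ to any chosen row and taking a single lateral step left is a column-adjacent path to an arbitrary cell of $C_k$; hence $C_k$ enjoys full random access throughout its active window (I would also note that cells can alternatively be reached top-down through a column's own still-empty lower cells, a pure vertical column-adjacent path). The spare load is sent directly to the front cell $(1,\cols)$ of $C_\cols$, a front-row cell that is trivially storable and retrievable; keeping $C_\cols$ otherwise empty until the very end also supplies the aisle that makes $C_{\cols-1}$ randomly accessible, so no storage column is ever left without an empty adjacent corridor.

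The hard part, and the step I expect to carry the whole proof, is reconciling the online model with these two requirements at once: a column can be randomly accessed only during a \emph{temporally} localized window (while its neighbor is empty), yet to end up \emph{sorted and exactly full} a column must receive a \emph{rank}-determined set of loads, and with lookahead $1$ one cannot online-sort an adversarial stream into a fixed set of $\rows$ cells without relocation. I would resolve this by leaning on the fact that \dep is fully known in advance to fix, once and for all, the contiguous departure-rank block assigned to each column and the target row of every load inside its column, so that each arriving load has a predetermined destination independent of the unknown future; the random access to the active column then lets me drop each load directly into its predetermined cell, and \Cref{thm:3col}/\Cref{thm:general-full} can be invoked for the last columns. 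The crux that remains is a \textbf{reachability invariant}: I must prove by induction on the number of stored loads that, under this assignment, at the instant any load arrives its target cell still has a clear column-adjacent corridor (via its own lower cells, via the empty right neighbor, or via the reserved aisle), i.e.\ that the sweep never strands a load in a cell whose corridors are all blocked. Establishing this invariant, rather than the counting or the retrieval direction, is where I expect essentially all the difficulty to lie.
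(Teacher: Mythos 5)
Your target arrangement asks for more than the statement needs, and that extra requirement is exactly what breaks the argument. You fix, for every load, not just its column but its \emph{row} within the column (sorted front-to-back by departure order), and then you correctly observe that the whole proof hinges on a reachability invariant: that each predetermined cell still has a clear column-adjacent corridor when its load arrives. You leave that invariant unproven, and in fact it is false. Take $\rows=3$, $\cols=3$, $\nobjs=7$, with load $1$ assigned to the front of $C_3$, loads $2,3,4$ to $C_2$ (front to back), loads $5,6,7$ to $C_1$ (front to back), and the benign arrival order $\inc=(1,2,\dots,7)$. After loads $1$ through $5$ are stored, load $6$ must reach the middle cell of $C_1$: the cell below it holds load $5$, the column $C_2$ to its right is already full, and there is no column to its left. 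No schedule of predetermined rows with front-to-back sorting survives adversarial (or even identity) arrival orders, because a column can only be filled back-to-front without relocations once its neighbors are occupied, while your assignment may demand the front cell be filled first. The same issue undermines your ``left-to-right sweep'': with destinations determined by departure rank, arrivals do not respect any column ordering, so there is no single ``active'' column with an empty neighbor.

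The paper's proof avoids this by dropping within-column sorting entirely. It assigns only the \emph{column} by departure rank (load $1$ to the front of $C_{\cols}$, the next $\rows$ departing loads to $C_{\cols-1}$, the next $\rows$ to $C_{\cols-2}$, and so on), and within a column each arriving load simply takes the current \emph{topmost} available cell, in arrival order. Every storage path is then a straight vertical path through the still-empty lower cells of its own column, so no cross-column corridor is ever needed during storage and the invariant you were worried about becomes trivial. Retrieval still works by \Cref{obs:local} even though columns are internally unsorted: since the departure-rank blocks are contiguous and decrease to the right, the entire column $C_{i+1}$ is already empty by the time the first load of $C_i$ departs, so each load steps one cell right and goes straight down -- a column-adjacent path. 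Your counting, your use of the known departure sequence to fix column blocks, and your retrieval-is-easy observation are all fine; the missing idea is that you must let the arrival order, not the departure order, dictate the row.
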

\begin{proof}
Let us assume there are $\nobjs = \rows(\cols - 1) + 1$ loads, as it is straightforward to handle fewer loads.
The goal is to fill the leftmost $\cols-1$ columns top-down such that at the departure stage, the loads in each column $C_i$ depart after the loads at column $C_{i+1}$ have departed.
We store each load in a column at the current topmost available cell based on its departure order:
Load $1$ is stored at $C_{\cols}$, at its front cell.
Loads $2, \ldots, 2+\rows - 1$ are stored at $C_{\cols-1}$, loads $2+\rows, \ldots, 2+2\rows-1$ are stored at $C_{\cols-2}$, and so on.
This approach results in a straight upward path for each storage action. Each retrieval is possible via a column-adjacent path (via the adjacent column on the right) or a straight path.
\end{proof}

The observation leading to Proposition~\ref{prop:fill-upto-last-col} allows to show that if \storage is square-shaped or wider (at its open side), full-capacity storage with limited relocations is possible.

\begin{theorem} \label{prop:L-paths}
For an \( \rows \times \cols \) workspace with $\rows \le \cols$, loads can always be stored at full capacity (or less) with at most \( \rows - 1 \) relocations,
with one action for each storage and at most two actions for each retrieval,
using a lookahead of 1.
\end{theorem}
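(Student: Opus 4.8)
The plan is to bootstrap off \Cref{prop:fill-upto-last-col}, which already stores $r(c-1)+1$ loads with no relocations, with column-adjacent paths, and with lookahead $1$. Since full capacity is $rc=\big(r(c-1)+1\big)+(r-1)$, exactly $r-1$ loads remain beyond what that proposition handles, matching the relocation budget (the sub-capacity case is only easier). First I would fix the target layout: place load $1$ at the front cell of $C_c$; fill $C_1,\dots,C_{c-2}$ top-down in arrival order exactly as in \Cref{prop:fill-upto-last-col}; fill $C_{c-1}$ with its loads $2,\dots,r+1$ as close as the access paths allow to their sorted (front-to-back, by departure rank) order; and place the $r-1$ last-departing loads into the remaining cells of $C_c$. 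Because the departure rank of each arriving load is known under lookahead $1$, its destination column is determined on arrival, so every storage step is a single action.

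Next I would analyze retrieval and localize all the difficulty to one column. Load $1$ leaves straight out of the front of $C_c$, freeing the cell $(1,c)$. After that the bulk columns are retrieved right-to-left: each of $C_{c-2},\dots,C_1$ has, at its turn, an already-empty right neighbor (just retrieved), so its loads come out by single column-adjacent, "L-shaped," actions with no relocation, and the $r-1$ surplus loads are finally extracted from $C_c$ through the by-then-empty $C_{c-1}$. The only obstructed column is $C_{c-1}$: when its loads must depart, its right neighbor $C_c$ still holds the surplus, so the sole buffer available is the single cell $(1,c)$ together with the cells of $C_{c-1}$ that vacate as its own retrieval proceeds.

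The theorem therefore reduces to emptying $C_{c-1}$ in departure order using essentially one spare cell, within $r-1$ relocations and at most one relocation per load. The plan here is precisely why I fill $C_{c-1}$ near-sorted during storage rather than in arrival order: each load is placed as near its sorted slot as the reachable empty region permits, and where the exact slot is blocked it is parked one row out of place. I would then argue that at most $r-1$ loads are ever out of place and that no two of them block the same target, so that at retrieval each blocked load is freed by moving a single blocker into $(1,c)$ (or a freshly vacated cell of $C_{c-1}$) and is then pulled out---two actions. This matches the $2\times2$ instance $A=(1,4,2,3)$, where a single relocation ($r-1=1$) is both necessary, by \Cref{o:1}, and sufficient. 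The hypothesis $r\le c$ enters here: having width at least height supplies enough routing columns to keep $C_{c-1}$ within one displacement per load and caps the forced deviations at $r-1$.

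The hard part will be this last step. I expect the main obstacle to be establishing, for an online lookahead-$1$ storage phase, the invariant that $C_{c-1}$ can always be brought to within "one blocker per load" of sorted, and that the lone buffer $(1,c)$ suffices to repair every deviation with a single relocation. I would attack it with a counting/potential argument tracking both the number of out-of-place loads in $C_{c-1}$ (to bound it by $r-1$ and to schedule repairs so they never collide on the buffer) and the reachable empty region (to certify that every storage path actually exists as the grid fills). This reachability bookkeeping---keeping the empty cells connected to the front row throughout---is the routine but delicate part that underlies the entire construction.
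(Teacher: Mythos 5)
Your approach is genuinely different from the paper's, and it contains a gap that I believe is fatal rather than merely unfinished. The paper does not keep columns as the storage units at all in the rightmost block: it partitions the bottom-right $\rows\times\rows$ square into $\rows-1$ nested L-shaped paths (left column plus top row of each successively smaller square), plus the single front-right cell, assigns consecutive departure ranks to consecutive paths, and fills each path front-to-back like a stack as its loads arrive. This makes storage trivially one action regardless of arrival order, makes every non-corner load retrievable in one action because the neighboring higher-indexed path is already empty at its departure time, and isolates the difficulty to exactly the $\rows-1$ corner cells, each repaired by relocating the load below the corner into the two already-empty adjacent paths. The within-path arrival order never matters.

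Your column-based scheme, by contrast, makes the within-column order of $C_{\cols-1}$ matter, and that order cannot be controlled with lookahead 1. The adversary can deliver all loads destined for $C_1,\dots,C_{\cols-2}$ and $C_{\cols}$ first, so that when the loads $2,\dots,\rows+1$ of $C_{\cols-1}$ arrive, the only empty cells in the grid are the cells of $C_{\cols-1}$ itself; reachability then forces that column to be filled back-to-front in arrival order (placing any load below an empty cell of $C_{\cols-1}$ permanently disconnects that cell). If those loads also arrive in departure order, $C_{\cols-1}$ ends up exactly reverse-sorted. Concretely, for $\rows=\cols=3$ take $\inc=(1,9,8,7,6,5,2,3,4)$: load $2$ is forced to $(3,2)$ behind $3$ and $4$, and after load $1$ departs the sole free cell is $(1,3)$, so extracting load $2$ degenerates into a sliding-puzzle evacuation costing far more than one relocation --- your invariant that $C_{\cols-1}$ is always within ``one blocker per load'' of sorted is false, not just unproven. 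The hypothesis $\rows\le\cols$ does not rescue this, since the extra columns are full by the time $C_{\cols-1}$ must be emptied. If you want to salvage a column-free argument, the essential idea you are missing is to choose storage \emph{units} (the paper's L-paths) whose fill order is forced to coincide with a stack discipline \emph{and} whose neighbors are guaranteed empty at their own retrieval time, so that order within a unit is irrelevant everywhere except at a bounded number of corner cells.
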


\begin{proof}
    We generalize the column-filling approach from \Cref{prop:fill-upto-last-col} to a path-filling approach, i.e., we define a sequence of paths, each of which will contain loads departing only after the loads on the next path.
    The first $\cols-\rows$ paths are the leftmost $\cols-\rows$ columns (we have $\cols-\rows \ge 0$).
    The remaining portion of \storage is an $\rows \times \rows$ square.
    Take the next path to be the square's leftmost column and top row, i.e., an upward segment and a rightward segment.
    Define the next path similarly for the remaining $(\rows-1) \times (\rows-1)$ square, and so on. See \Cref{fig:L-paths} for a visualization of the corresponding notions.
    
    Denote the resulting complete sequence of paths by $\pi_1, \ldots, \pi_{\cols}$.
    Each path contains a cell on the front row, so we can fill each path with arriving loads front to back (similarly to a stack).
    Loads are assigned to the paths based on their departure order:
    Load $1$ is assigned to $\pi_{\cols}$ (which is a single cell), loads $2,3,4$ are assigned to $\pi_{\cols-1}$, and so on, with the last $|\pi_{1}|$ departing loads assigned to $\pi_{1}$.

As loads arrive, they are stored using the paths above, which potentially contain one turn and one action. On departure, each load located only on a vertical segment of some $\pi_i$ (for $i < \cols$) can be retrieved using a column-adjacent path using the column to the right, i.e., via $\pi_{i+1}$, since the loads on $\pi_{i+1}$ depart earlier. Loads located only on a horizontal segment of some $\pi_i$ can be similarly retrieved via a downward path. Both of the latter types of loads require only one action. A load located on a "corner" of some $\pi_i$, however, may not be adjacent to an earlier departing load and hence may require relocation. There are $r-1$ such loads and hence at most $r-1$ relocations. Let $x$ be such a load that is blocked at its departure. We relocate the adjacent load $y$ below $x$ at most two cells to the right, which is possible since $\pi_{i+1}$ and $\pi_{i+2}$ (assuming $i+2 \le \cols$) are empty at this stage. The relocation results in a clear retrieval path for $x$ and allows $y$ to be retrieved via a downward path. Notice that it may also be possible to relocate a blocking load of $x$ one cell right/down for it to be retrieved.  Overall, each load requires at most 2 actions to be retrieved.
\end{proof}

\begin{figure}[tb]
    \centering  
    \begin{overpic}[width=\columnwidth]{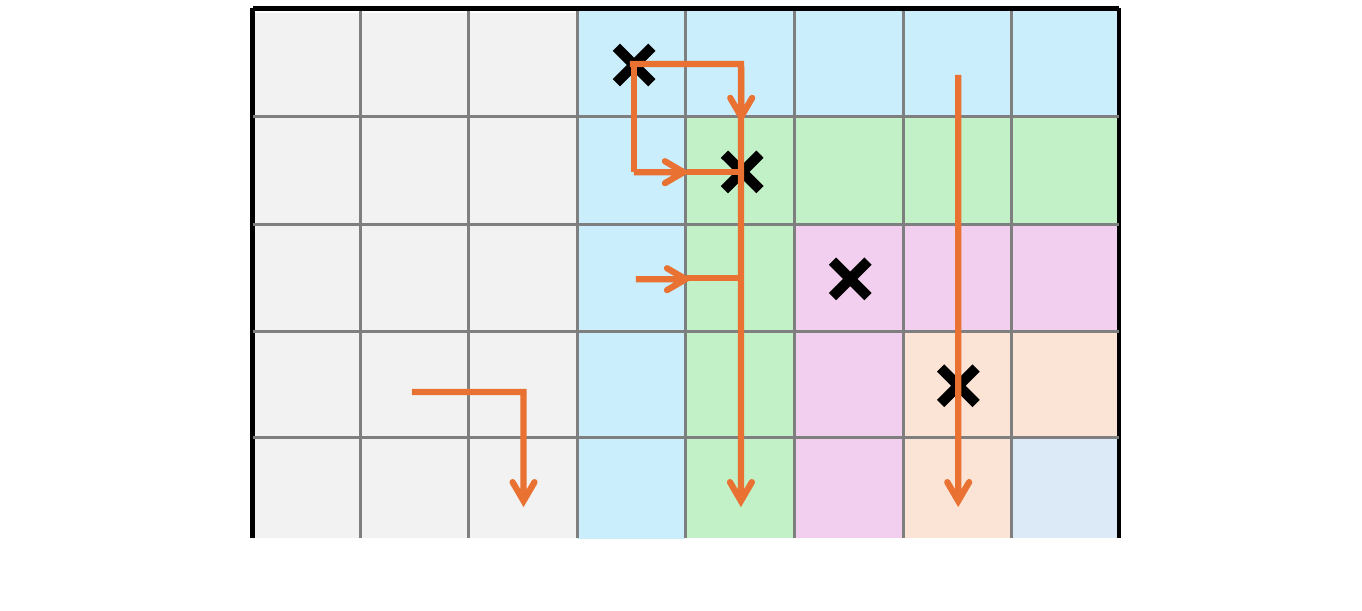}
    \put(22, 1.5){{\small $\pi_1$}}
    \put(29.5, 1.5){{\small $\pi_2$}}
    \put(37, 1.5){{\small $\pi_3$}}
    \put(45, 1.5){{\small $\pi_4$}}
    \put(52.5, 1.5){{\small $\pi_5$}}
    \put(61, 1.5){{\small $\pi_6$}}
    \put(68.5, 1.5){{\small $\pi_7$}}
    \put(76, 1.5){{\small $\pi_8$}}
    \end{overpic}
    \caption{The storage strategy of \Cref{prop:L-paths} for a $(5 \times 8)$ grid.
    Paths $\pi_1$--$\pi_3$ are the left three vertical columns. Paths $\pi_4$--$\pi_7$ are $L$-shaped and distinguished using different colors. $\pi_8$ passes only the bottom right cell. Corner cells are marked with crosses, and the possible types of retrieval paths are shown using arrows, except for loads that can be relocated.}
    \label{fig:L-paths}
\end{figure}

\ifthenelse{\boolean{isfull}}{ %
    We combine our results to achieve a 1.125-approximation for the minimum number of actions:
    
    \begin{corollary} %
    For an \( \rows \times \cols \) workspace with $\rows \le \cols$, the problem of storage and retrieval can be solved with a minimum number of actions in $O(n)$ time with an approximation ratio\footnote{The approximation ratio is less than $1.05$ for $\rows > 9$ and decreases further as density decreases.} of $1.125$  using a lookahead of 1.
    \end{corollary}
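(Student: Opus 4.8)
The plan is to package \Cref{prop:fill-upto-last-col} and \Cref{prop:L-paths} into a single algorithm that branches on the number of loads, and then to bound its cost against the trivial lower bound on the optimum. The lower bound I would use is $\mathrm{OPT} \ge 2\nobjs$: every load must incur at least one storage action (to enter \storage) and one retrieval action (to leave it), and each action moves a single load, so any valid schedule uses at least $2\nobjs$ actions. The algorithm is then: if $\nobjs \le \rows(\cols-1)+1$, run the relocation-free procedure of \Cref{prop:fill-upto-last-col}; otherwise run the path-filling procedure of \Cref{prop:L-paths}. Both use only a lookahead of $1$ and both run in $O(\nobjs)$ time — in particular, the path-to-load assignment of \Cref{prop:L-paths} is dictated directly by the (fully known) departure order, so unlike in \Cref{thm:general-full} no sorting step is needed here.

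First I would dispose of the low-density branch: when $\nobjs \le \rows(\cols-1)+1$, \Cref{prop:fill-upto-last-col} yields a schedule with no relocations, i.e.\ exactly $2\nobjs$ actions, which equals the lower bound and gives ratio $1$. The substantive branch is $\nobjs > \rows(\cols-1)+1$, where \Cref{prop:L-paths} spends $\nobjs$ storage actions, $\nobjs$ retrieval actions, and at most $\rows-1$ relocations, for a total of at most $2\nobjs + (\rows-1)$. Dividing by the lower bound, the achieved ratio is at most $1 + \frac{\rows-1}{2\nobjs}$, so the whole argument reduces to showing $\frac{\rows-1}{2\nobjs} \le \tfrac18$ in this branch.

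The crux — and the one genuinely non-mechanical step — is the observation that entering the second branch forces $\nobjs$ to be large relative to $\rows$, which is exactly what lets the $\rows-1$ relocations be amortized. Concretely, in that branch $\nobjs > \rows(\cols-1)+1$, and since the corollary assumes $\rows \le \cols$ we obtain $\nobjs \ge \rows(\rows-1)+2$. Substituting,
\[
\frac{\rows-1}{2\nobjs} \;\le\; \frac{\rows-1}{2\bigl(\rows(\rows-1)+2\bigr)} \;\le\; \frac18 ,
\]
where the second inequality, after clearing denominators, is equivalent to $(\rows-2)(\rows-3) \ge 0$ and therefore holds for every integer $\rows$ (with equality precisely at $\rows \in \{2,3\}$, which is what makes the constant $1.125$ tight). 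The same estimate shows the ratio behaves like $1 + O(1/\rows)$ and drops below $1.05$ once $\rows > 9$, matching the footnote, while the low-density branch is what drives the ratio toward $1$ as density decreases. I expect no real obstacle beyond correctly recognizing that the threshold $\rows(\cols-1)+1$ separating the two branches is precisely the quantity guaranteeing that $\nobjs$ is quadratic in $\rows$ whenever any relocation is incurred; the remaining steps are bookkeeping of action counts and the one-line algebraic inequality.
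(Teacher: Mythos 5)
Your proposal is correct and follows essentially the same route as the paper: branch on whether $\nobjs \le \rows(\cols-1)+1$, apply \Cref{prop:fill-upto-last-col} or \Cref{prop:L-paths} accordingly, and divide the at most $2\nobjs + \rows - 1$ actions by the trivial lower bound $2\nobjs$. The only (immaterial) difference is that the paper tightens the action count to $2\nobjs + \rows - 1 - k$ by leaving the $k$ empty cells at corners of the L-shaped paths, whereas you keep the crude $\rows-1$ relocation bound and instead exploit the lower bound $\nobjs \ge \rows(\rows-1)+2$ forced by entering the second branch; both yield the same constant $1.125$, tight at $\rows\in\{2,3\}$.
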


    \begin{proof}
    If there are $\nobjs = \rows(\cols - 1) + 1$ or fewer loads, we can apply the strategy of \Cref{prop:fill-upto-last-col} and achieve no relocations.
    Otherwise, suppose $\nobjs = \rows \cols - k$ for some ${0 \le k < r-1}$, i.e., there are exactly $k$ empty cells when we fill \storage.
    In this case, we apply the strategy in \Cref{prop:L-paths} while not using $k$ of the $r-1$ corners.
    Since only a load placed in a corner can result in a relocation during its retrieval, we have at most ${2\nobjs + (\text{\#corners occupied})} = 2\nobjs + \rows -1 -k$  total actions.
    Therefore we get the approximation
    \[
    \frac{2\nobjs + \rows - 1 - k}{2\nobjs} = \frac{2\rows\cols + \rows - 1 - 3k}{2\rows\cols - 2k} \le \frac{2\rows\cols + \rows - 1}{2\rows\cols} \le 1.125\text{,}
    \]
    where the first inequality is straightforward to verify and for the second one we notice that the ratio is maximized for $\rows = \cols = 2$.
    \end{proof}
 }{ %
Using the fact that the solution above uses at most $r-1$ relocations, it is straightforward to verify that the solution provides a $1.125$-approximation, i.e., the number of actions is within $1.125$ of the optimum. The proof is omitted.}

\section{The fully online case}%

In the fully online setting, density typically has to be limited to minimize load storage/retrieval time. This means that the arrangement of the loads in \storage needs to be carefully chosen.  This section analyzes the relationship between storage density and the number of actions required for storage and retrieval.

\subsection{Depth and density}
Recall first a key notion for the number of loads that may potentially block a load that is to be retrieved. An arrangement of loads \arr has {\em depth} $k$ if every load can be retrieved by first relocating/retrieving no more than $k-1$ other loads.

Given a fixed depth $k \in \mathbb{N}$, a key question is to determine the maximum density that can be accommodated while ensuring an arrangement of the loads with depth at most $k$ exists.
Denote this density by $\rho(k)$.
This relationship is investigated by \cite{gue2006very}, which establishes the following:

\begin{theorem}[Upper bound, \cite{gue2006very}] \label{thm:opt-density}
For rectangular 2D grids, $\rho(k) \le 2k/(2k+1)$.   
\end{theorem}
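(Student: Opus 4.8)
The plan is to prove the equivalent statement that the number of empty cells $f \coloneqq \rows\cols - \nobjs$ is at least a $\tfrac{1}{2k+1}$ fraction of the grid. Since the density equals $\nobjs/(\nobjs+f)$, the bound $\rho(k)\le 2k/(2k+1)$ is equivalent to the inequality $\nobjs \le 2k\,f$. So I would fix an arrangement \arr\ of depth at most $k$ and aim to charge every occupied cell to an empty cell so that no empty cell is charged more than $2k$ times.

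First I would restate the depth condition in reachability terms. Retrieving a load amounts to opening a path of empty cells from the front row to its cell, and relocating/retrieving at most $k-1$ other loads corresponds to being allowed to delete at most $k-1$ occupied cells before such a path exists. Hence \arr\ has depth $\le k$ iff for every occupied cell $c$ the minimum number of occupied cells lying on a front-to-$c$ grid path is at most $k-1$. By the vertex form of Menger's theorem, this is precisely the statement that no load is separated from the front row by $k$ pairwise vertex-disjoint occupied \emph{walls}. This reformulation is what lets the empty cells, rather than the departure orders, carry the whole argument.

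Next I would build the charging map. The intended extremal picture is the aisle layout (every $(2k+1)$-st column empty): a single empty corridor can service the $k$ occupied columns on each of its two sides, i.e.\ $2k$ columns, which is exactly where the factor $2k$ and the ratio $2k/(2k+1)$ come from. To turn this into a proof for an arbitrary \arr, I would give each occupied cell a canonical escape route---say the front-to-$c$ path minimizing first the number of occupied cells crossed (at most $k-1$, by the reformulation) and then breaking ties lexicographically---and charge $c$ to the empty cell at which this route first leaves the occupied region toward the front. The key quantitative claim is then a local \emph{reachability lemma}: each empty cell is the designated exit of at most $2k$ occupied cells, because the bound of $k-1$ allowed crossings limits both how deep (at most $k$) and from how many lateral directions (the two sides of the corridor) loads can funnel through a common exit. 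Given the lemma, double counting yields $\nobjs \le 2k f$, and hence
\[
\frac{\nobjs}{\nobjs+f}\le\frac{2kf}{2kf+f}=\frac{2k}{2k+1}.
\]

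The hard part will be the reachability lemma, i.e.\ controlling the sharing of escape exits: unlike in the clean column layout, escape corridors in a dense arrangement can bend and overlap, so two loads may contend for the same exit from unrelated directions. I would handle this with an exchange argument that reroutes conflicting escapes together with a discharging step that redistributes any overloaded exit to nearby empty cells, using that a genuine violation of the $2k$ bound would manufacture $k$ disjoint occupied separators and thereby contradict depth $\le k$. As a sanity check, the case $k=1$ collapses to requiring that every load be adjacent to a front-connected empty cell---a connected-domination condition on the grid whose optimum is the $2/3$ aisle pattern---so the argument reproduces the known tight base case.
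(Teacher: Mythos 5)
First, a framing point: the paper does not prove this theorem at all---it is imported verbatim from \cite{gue2006very} (and the paper immediately notes that Gue's model, with a single I/O cell and a connected empty region, differs from the model here). So your attempt has to stand on its own, and it does not. The reduction to the exact inequality $\nobjs \le 2k f$ (with $f$ the number of empty cells) is already false in the paper's model, where the entire front row is open. Two counterexamples: (i) a completely full $\rows\times\cols$ grid has $f=0$ and depth $\rows$ (each load escapes straight down after the loads beneath it are retrieved, touching no empty cell), so for $k=\rows$ you would need $\rows\cols\le 0$; (ii) for $k=1$, take a $4\times\cols$ grid with all of row $3$ empty, cells $(1,1)$ and $(2,1)$ empty, and everything else full---every load is either in the front row or adjacent to the front-connected empty region, so the arrangement has depth $1$, yet $\nobjs=3\cols-2$ and $2kf=2\cols+4$, violating $\nobjs\le 2kf$ for $\cols>6$ and giving density tending to $3/4>2/3$. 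The bound $2k/(2k+1)$ can only be an asymptotic statement (or one tied to Gue's stricter access model); any correct argument must account for the loads near the open side that need no empty cells at all, and your charging map is simply undefined for those loads since their escape routes contain no empty cell.

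Second, even away from the boundary, the entire weight of your argument rests on the ``reachability lemma'' (each empty cell is the designated exit of at most $2k$ occupied cells), which you state, concede is the hard part, and do not prove. As stated it is false even in the interior: for $k=1$ an empty cell has four neighbours, at most one of which is needed to continue the escape corridor toward the front, so three loads can legitimately designate it as their exit, giving a charge of $3>2k$. The proposed repair---an exchange argument plus a discharging step that redistributes overloaded exits to ``nearby empty cells''---is exactly where the factor $2k$ would have to be extracted, and you give no mechanism for why the redistribution terminates with every empty cell holding at most $2k$ units, nor how a violation ``manufactures $k$ disjoint occupied separators.'' The extremal picture you identify (aisles every $2k+1$ columns) is the right intuition, but the proposal does not contain a proof, and the precise statement it targets is not the one that is true.
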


In the setting of \cite{gue2006very}, however, all the free cells are required to form a single connected component containing a single input/output cell.
The proposed setting in this work does not impose such a requirement, as the assumption is that the whole bottom row of \storage can be used to store and retrieve loads. 
Therefore, it is possible to realize the upper bound using the following simple aisle-based approach.

\begin{prop}\label{p:lb}
For a given depth $k$, if the number of columns \cols in \storage is a multiple of $2k+1$, then there is an arrangement that achieves the optimal density of $2k/(2k+1)$.
\end{prop}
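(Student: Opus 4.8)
The plan is to exhibit one explicit \emph{periodic aisle arrangement} and then check the two quantitative claims separately: that it meets the depth bound $k$ and that its density equals $2k/(2k+1)$. Optimality is then immediate, since \Cref{thm:opt-density} already gives $\rho(k) \le 2k/(2k+1)$, so any arrangement of depth $\le k$ attaining this density is density-optimal. Concretely, since \cols is a multiple of $2k+1$, I would write $\cols = m(2k+1)$ and partition the columns of \storage into $m$ consecutive blocks of $2k+1$ columns each. Within every block I designate the \emph{central} column (the $(k+1)$-st column of the block) as an empty \emph{aisle} and fill the remaining $2k$ columns of the block completely. Because the aisle sits in the middle, each block has exactly $k$ full columns to the left of its aisle and $k$ to the right, so every stored load lies at horizontal distance at most $k$ from the aisle of its own block.

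The density bookkeeping is routine. Each block has $2k+1$ columns of which exactly $2k$ are full, and every full column contributes \rows loads, so the total number of loads is $2k\rows m$ out of $\rows\cols = (2k+1)\rows m$ cells, giving density $2k/(2k+1)$ exactly. Combined with \Cref{thm:opt-density}, this is the best achievable.

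For the depth bound I would show that every load can be retrieved after moving at most $k-1$ other loads. Fix a load $x$ at row $h$ inside some block and consider the access path that ascends that block's (empty) aisle from the front row up to row $h$ and then turns horizontally toward $x$. Since $x$ is at distance at most $k$ from the aisle, this path crosses at most $k-1$ occupied cells, namely the loads lying strictly between $x$ and the aisle in row $h$. These blocking loads can be cleared one at a time, working from the cell nearest the aisle outward: the load adjacent to the aisle exits straight into the empty aisle and down to the front, and once it is gone the next one follows through the just-vacated cell, and so on. After removing these $\le k-1$ loads the path to $x$ is clear. As the definition of depth explicitly allows retrieving (not only relocating) the blocking loads, the arrangement has depth at most $k$.

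The only point that genuinely requires care, and hence the ``main obstacle,'' is the \emph{centering} of the aisle: any off-center placement would leave some full column at distance up to $2k$ from its nearest aisle, forcing more than $k-1$ blockers and violating the depth bound, so the symmetric split into $k$ columns on each side is essential. Everything else — the density count and the feasibility of the clearing schedule through a single empty column — is straightforward, so beyond fixing the central-aisle layout I expect no real difficulty. The case $k=1$ recovers the familiar full/aisle/full layout of density $2/3$ depicted in \Cref{fig:6x6_depth0}.
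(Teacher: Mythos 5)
Your construction is exactly the paper's: partition the columns into blocks of $2k+1$, leave the central column of each block as an empty aisle, fill the remaining $2k$ columns, and invoke \Cref{thm:opt-density} for optimality. The paper's proof is just a terser version of yours (it states the distance-at-most-$k$ and column-counting facts without spelling out the clearing schedule), so the proposal is correct and takes essentially the same approach.
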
 
\begin{figure}[tb]
    \centering    \includegraphics[width=0.95\columnwidth]{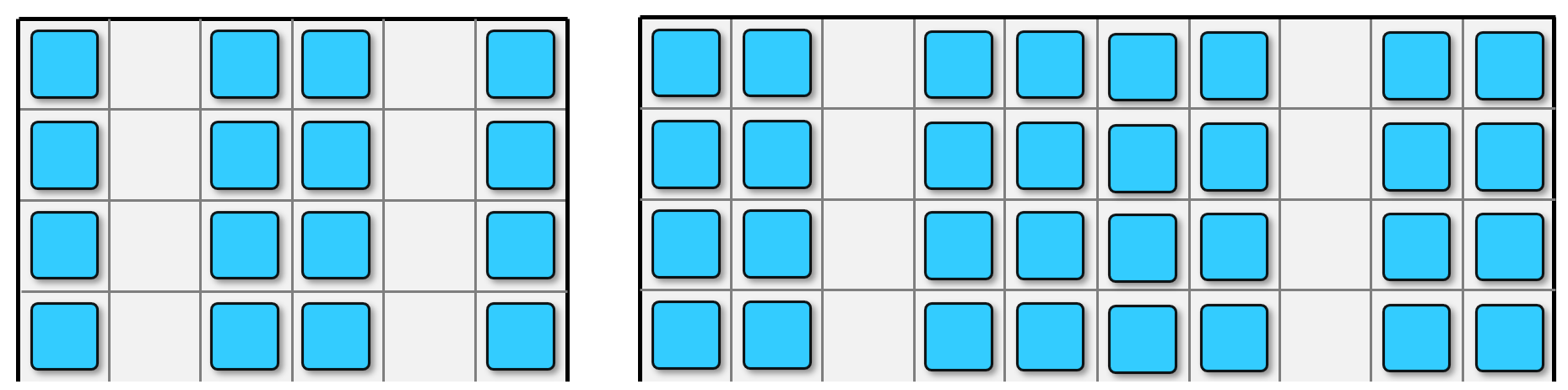}
    \caption{Left: A $4\times 6$ storage area with 1-deep aisles. Right: A $4\times 10$, 2-deep aisle arrangement.
    }
    \label{fig:6x6_depth0}
\end{figure}
\vspace{-4mm}
\begin{proof}
Consider an aisle-based configuration where each grid column is either completely full of loads or empty. For each set of $2k+1$ contiguous columns, starting from the leftmost $2k+1$ columns, set all the columns to be full of loads, except the $(k+1)$th column (i.e., the middle column of the set), which forms an empty aisle. See \Cref{fig:6x6_depth0}. This results in a depth of $k$ since any cell with a load is at a distance of at most $k$ from an empty aisle column, while the density is achieved since for every $2k+1$ columns, there are $2k$ columns full with loads.
\end{proof}

We call the latter pattern a \textbf{$k$-deep aisle} arrangement and call the empty columns in it \emph{aisles}.

\textbf{Remark:}
The case of 1-deep aisles is frequently adapted in practice, e.g., at libraries or grocery stores, where books or goods can be stored and retrieved using a single action via an aisle. Proposition~\ref{p:lb} explains how this natural strategy is effective in balancing storage density and access efficiency by setting $k = 1$ in this particular case. 

\vspace{-3pt}
\subsection{Action bounds and density}
We now analyze the connection of depth and density to the worst-case number of actions in the fully online setting.
We show that the trade-off between the number of actions and density carries over.

\ifthenelse{\boolean{isfull}}{ %
    \begin{figure}[tb]
    \centering
    \begin{tikzpicture}
        \begin{axis}[
            width=0.8\linewidth,
            height=3.7cm,
            xlabel={Number of actions ($a$)},
            ylabel={
                \parbox[c]{2.5cm}{\centering
                    Optimal density:\\ 
                    $2a/(2a+1)$
                }
            },
            ylabel style={font=\small},
            xtick={1,2,3,4,5,6,7,8,9,10},
            ymin=0.6, ymax=1.05,
            ytick={0.6, 0.7, 0.8, 0.9, 1},
            domain=1:10,
            samples=10,
        ]
            \addplot[mark=*, blue] 
            coordinates {
                (1, 2/3)
                (2, 4/5)
                (3, 6/7)
                (4, 8/9)
                (5, 10/11)
                (6, 12/13)
                (7, 14/15)
                (8, 16/17)
                (9, 18/19)
            };
        \end{axis}
    \end{tikzpicture}
    
    \caption{Maximum density versus the maximum number of actions allowed for each storage/retrieval.}
    \label{fig:density-plot}
    \end{figure}
}{ %

}

\begin{prop}
To achieve a given worst-case bound ${a \in \mathbb{N}}$ on the number of actions for each storage/retrieval, the density must be at most $2a/(2a+1)$.
Conversely, a density of ${2a/(2a+1) - \varepsilon}$ is possible for a small $\varepsilon > 0$.
\end{prop}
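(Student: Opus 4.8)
The plan is to reduce both directions to the depth--density relationship already established in \Cref{thm:opt-density} and \Cref{p:lb}, using the observation that in the fully online setting the cost of retrieving a load is governed by the \emph{depth} of the arrangement present after the storage phase. First I would make this correspondence precise: retrieving a load that is blocked by $b$ others costs $b$ relocations plus one retrieval, i.e.\ $b+1$ actions, because a blocker cannot be removed out of departure order and so must be relocated within \storage rather than retrieved. Hence an arrangement of depth $k$ is exactly one in which every load can be retrieved in at most $k$ actions (at most $k-1$ relocations and one retrieval), and in which some load requires exactly that many.

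For the necessity direction, I would fix any strategy that guarantees at most $a$ actions per storage and per retrieval, and consider the arrangement $\arr$ reached at the end of the storage phase. Since the departure order is adversarial, the adversary may request first whichever load is buried deepest in $\arr$. If $\arr$ had depth $k > a$, that load would need at least $k-1 \ge a$ relocations before retrieval, i.e.\ at least $a+1$ actions, contradicting the guarantee; therefore $\arr$ has depth at most $a$. Applying \Cref{thm:opt-density} with $k=a$ then gives density at most $\rho(a) \le 2a/(2a+1)$. Crucially, allowing up to $a$ actions during storage does not change this, since it is the final arrangement alone that must survive the adversarial retrieval phase.

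For the achievability direction, I would exhibit the $a$-deep aisle arrangement of \Cref{p:lb} (taking $k=a$). Storage costs a single action per load: filling each full column from its back (top) cell toward the front along a straight vertical path keeps every target cell reachable from the front row, and because depth $a$ makes every placement equally retrievable, a fixed cell-filling order may be used irrespective of the unknown arrival and departure sequences. Each retrieval costs at most $a$ actions by the depth-$a$ property, relocating the at most $a-1$ loads between the target and its block's aisle and then extracting the target down the aisle. When $\cols$ is a multiple of $2a+1$ this attains density exactly $2a/(2a+1)$; for arbitrary $\cols$ I would tile $m = \lfloor \cols/(2a+1)\rfloor$ full blocks and leave the remaining $s < 2a+1$ columns empty to preserve depth, giving density $2am/(m(2a+1)+s)$, which lies within any prescribed $\varepsilon > 0$ of $2a/(2a+1)$ once the grid has enough columns.

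The step I expect to be the main obstacle is the online retrieval bookkeeping in the achievability direction: after relocating blockers into the aisle to extract one target, I must verify that \emph{every} subsequent retrieval still costs at most $a$ actions, i.e.\ that the depth-$a$ property can be re-established after each extraction. The key point to make rigorous is that blockers can always be parked in the upper portion of the aisle, leaving its lower portion clear as a retrieval channel to the front, and that the cells vacated by already-retrieved loads only enlarge the available buffer as the process continues, so the invariant is maintained throughout.
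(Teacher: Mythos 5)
Your necessity direction is sound and is essentially the paper's argument: the adversary forces retrieval of the deepest load first, so the post-storage arrangement must have depth at most $a$, and \Cref{thm:opt-density} bounds the density. The gap is in the achievability direction, and it sits exactly where you flagged it. Parking blockers in the aisle does not work in the worst case: take $a\ge 2$, a block of $2a+1$ fully packed columns, and a target in the \emph{top} row at horizontal distance $a$ from the aisle. Its $a-1$ blockers sit in the top row, so each can enter the aisle only through the aisle's top cell; whether you then slide them down (clogging the lower aisle through which the target must exit) or leave one at the top cell (blocking the entry point for the target and the remaining blockers), the target cannot reach the front row without some load being moved twice, exceeding $a$ actions. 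This also shows your claim of density \emph{exactly} $2a/(2a+1)$ when $\cols$ is a multiple of $2a+1$ is too strong. Nor can the leftover $s<2a+1$ empty columns at the right edge rescue you: the front row of every full column is occupied, so the empty cells of an interior aisle are disconnected from those leftover columns, and a blocker has no path to them.

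The missing idea is the paper's local buffer: reserve $a-1$ \emph{non-aisle} empty cells adjacent to the aisle (this is where the $\varepsilon$ density loss genuinely comes from, not from far-away spare columns), and maintain the invariant that every aisle is completely empty after each retrieval by relocating each blocker into such a buffer cell rather than into the aisle. The first retrieval consumes at most $a-1$ buffer cells; thereafter the vacated target and blocker positions are themselves non-aisle empty cells reachable from the aisle, so the buffer never shrinks and the depth-$a$ property of the residual arrangement is preserved. Your closing intuition that vacated cells ``enlarge the available buffer'' is the right one, but it must be seeded with a nonempty initial buffer outside the aisle, and the aisle itself must be kept clear as the retrieval channel rather than used as parking.
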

\begin{proof}
By \Cref{thm:opt-density} we cannot obtain a density greater than $2a/(2a+1)$, as otherwise after storing all the loads, there would be a load that requires more than $a$ actions to be retrieved due to blocking loads.

We can achieve a density that approaches $2a/(2a+1)$ as either (or both) of the grid dimensions tends to infinity.
We use an $a$-deep aisle arrangement \arr, with the slight change of keeping a "buffer" of $a-1$ empty cells, which we take to be any cells adjacent to an aisle (i.e., empty column).
We first store the loads per \arr, which can be easily done using one action per load since we do not assign loads to specific cells.

For retrieval, our goal is to keep each aisle empty after each load is retrieved, which ensures that the depth of any subsequent arrangement remains at most $a$.
To this end, when we retrieve a load, we relocate each load that blocks the way to an empty non-aisle cell.
Since we start with $a-1$ such "buffer" cells, and at most $a-1$ loads are relocated, this can always be done.
\end{proof}

We conclude by observing that without any information on the departure/arrival sequences, one has to be conservative when choosing an appropriate arrangement and sacrifice density to ensure that retrievals can happen within a given number of actions.\ifthenelse{\boolean{isfull}}{ %
See \Cref{fig:density-plot}.
}{}

\vspace{-2pt}
\section{Experimental evaluation}

\begin{figure}[tb]
    \centering  
     \includegraphics[width=0.85\columnwidth]{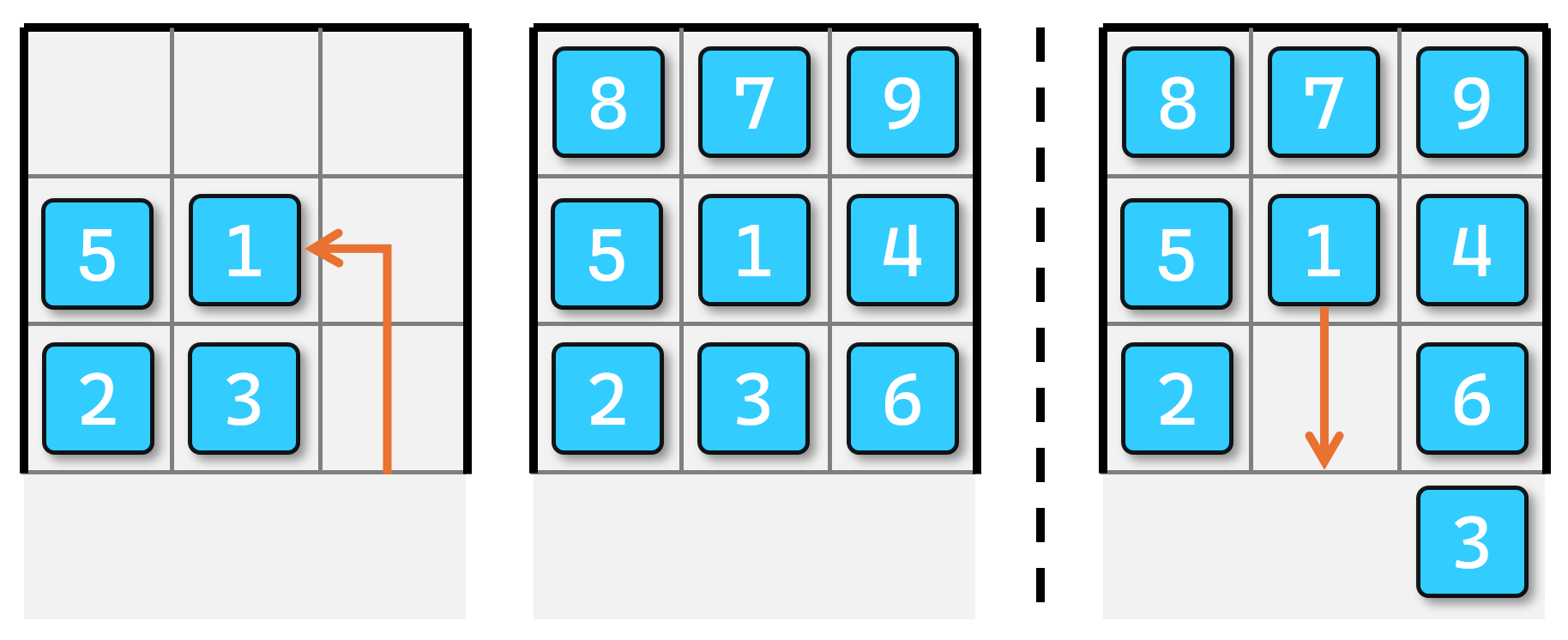}
    \caption{Snapshots of the baseline solution for ${\inc = (5, 2, 3, 1, 8, 7, 9, 4, 6)}$.
    Left: The first three arriving loads are stored in their respective designated rows. When $1$ arrives, storing it in the front row would block access to empty cells, and so it is stored in the next row.
    Middle: The last three loads are stored in the right column in their arrival order.
    Right: Upon retrieval, $3$ is relocated outside of $W$ to let $1$ out. Load $3$ is then stored back in its assigned cell, after which no relocations occur.
    }
    \label{fig:baseline-ex}
\end{figure}

To validate the presented theoretical results, we experimentally compare the main algorithm of the offline setting (from \Cref{thm:general-full}) against a straightforward best-first baseline.
The baseline stores earlier departing loads towards the front, aiming to store the first $c$ departing loads on the front row, the next $c$ departing loads on the second row, and so on.
Each row is filled from left to right without relocations in the storage phase:
If storing a load in its designated row disconnects empty cells from the front row, the load is stored on the next available row such that access to remaining empty cells is maintained.
Distance-optimal paths are used, except that during retrieval, we first minimize the number of blocking loads along the path (and then minimize distance).
As opposed to the proposed algorithm, the baseline may have to temporarily move blocking loads outside of $W$; see \Cref{fig:baseline-ex}.
Each load that blocks the current target load is moved outside of the grid along the retrieval path and (after the target load is retrieved) then stored back in its original cell along the same path.
A supplementary video illustrates solutions of the proposed algorithm and the baseline side by side, with a robot executing the motion.

\begin{table}[b]
\centering
\vspace{1mm}
\begin{tabular}{lccccc}
\toprule
\# rows/columns & $10$ & $15$ & $20$ & $25$ & $30$\\
\midrule
\textbf{\# actions} &  & & & &\\
baseline  & 127 & 283 & 496 & 769 & 1104 \\
{\bf proposed} & {\bf 100} & {\bf 225} & {\bf 400} & {\bf 625} & {\bf 900} \\
baseline suboptimality & 27\% & 26\% & 24\% & 23\% & 23\% \\
\midrule
\textbf{total distance } & & & & &\\
baseline & 1,387 & 4,564 & 10,486 & 20,009 & 34,324 \\
\textbf{proposed} & \textbf{1,170} & \textbf{3,774} & \textbf{8,727} & \textbf{16,779} & \textbf{28,679} \\
baseline suboptimality & 26\% & 27\% & 25\% & 23\% & 23\% \\
\textbf{our suboptimality}& \textbf{6\%} & \textbf{5\%} & \textbf{4\%} & \textbf{3\%} & \textbf{3\%}\\
\bottomrule
\end{tabular}
\vspace{1mm}
\caption{Performance of proposed algorithm vs. the baseline. %
}
\label{tab}
\vspace{-1mm}
\end{table}

We compare the proposed algorithm against the baseline on square grids filled to capacity. We measure actions during retrieval and the total distance traveled by loads, averaged over 25 random instances for each grid size; see \Cref{tab}.
For the baseline, the shown distance considers the motion of loads only inside $W$, as this is the focus of this work. For each cost metric, the suboptimality ratio is defined as the cost incurred divided by the optimum.
For distance, we use a lower bound ($m^3+m^2$ for a grid side length of $m$) in lieu of the optimum.
For both the action and total distance metrics, the baseline is always more than 20\% suboptimal. The proposed algorithm is optimal for relocation actions and at most 6\% suboptimal for total distance, confirming the diminishing impact of short sideways motions. %

\vspace{-4pt}
\section{Discussion and conclusion}
\vspace{-2pt}
This paper analyzes planning the storage and retrieval of uniform loads in high-density 2D grid-based storage while minimizing intermediate load rearrangement.
We establish an intriguing, sharp characterization for the offline case, showing rearrangement can always be avoided if and only if we can access a side of at least 3 cells wide. 
When only the departure sequence is known, we also identify settings where rearrangement can be avoided or the number of actions can be minimized to within a $9/8$ suboptimality bound.
These positive results contrast stack-based or stack-like storage settings, in which related problems are NP-hard~\cite{train-ordering, caserta2020container}.
In this sense, our results, especially \Cref{thm:3col}, are surprising, as at first glance, some of our variants appear NP-hard as well.
From a practical standpoint, our results -- both the proposed dense storage configurations and the associated algorithmic solutions --  are promising for real-world adoption.

Despite the apparent sequential nature of the plans produced by our key algorithmic results, we note that these plans are in fact highly amenable to parallel execution by multiple robots/AGVs. As an illustrative example, in executing plans according to Theorem~\ref{thm:general-full}, it is executed column by column until the last three. For the first $(c - 3)$ columns, loads can be carried in parallel from the right side of the column by multiple robots, which can then exit downward from the column underneath the loads, with minimum delays and conflicts. As the last three columns get filled, after storing a load, robots can move to the top and then left to exit downward from $C_{c-3}$. Similarly, parallel executions of the retrieval process can be carried out by multiple robots. In other words, our algorithmic results are naturally suitable for scalable execution using a fleet of robots, ideal for modern warehouses, ports, and other logistics automation settings. 

Beyond parallel motion, future research could allow storage and retrieval to be interleaved in an arbitrary manner, as opposed to storage strictly followed by retrieval.
Finally, for cases where we require a lookahead greater than 1, it is natural to ask whether a lower lookahead is possible.

\vspace{-1.5mm}
{\small
\bibliographystyle{IEEEtranN}  %
\bibliography{references, mrmp-references}

\begin{thebibliography}{23}
\providecommand{\natexlab}[1]{#1}
\providecommand{\url}[1]{#1}
\csname url@samestyle\endcsname
\providecommand{\newblock}{\relax}
\providecommand{\bibinfo}[2]{#2}
\providecommand{\BIBentrySTDinterwordspacing}{\spaceskip=0pt\relax}
\providecommand{\BIBentryALTinterwordstretchfactor}{4}
\providecommand{\BIBentryALTinterwordspacing}{\spaceskip=\fontdimen2\font plus
\BIBentryALTinterwordstretchfactor\fontdimen3\font minus \fontdimen4\font\relax}
\providecommand{\BIBforeignlanguage}[2]{{%
\expandafter\ifx\csname l@#1\endcsname\relax
\typeout{** WARNING: IEEEtranN.bst: No hyphenation pattern has been}%
\typeout{** loaded for the language `#1'. Using the pattern for}%
\typeout{** the default language instead.}%
\else
\language=\csname l@#1\endcsname
\fi
#2}}
\providecommand{\BIBdecl}{\relax}
\BIBdecl

\bibitem[Wurman et~al.(2008)Wurman, D'Andrea, and Mountz]{wurman2008coordinating}
P.~R. Wurman, R.~D'Andrea, and M.~Mountz, ``Coordinating hundreds of cooperative, autonomous vehicles in warehouses,'' \emph{AI magazine}, vol.~29, no.~1, pp. 9--9, 2008.

\bibitem[aut({\natexlab{a}})]{auto-port}
``China stakes global dominance in race to build intelligent ports,'' \url{https://www.scmp.com/news/china/science/article/3250341/china-stakes-global-dominance-race-build-intelligent-ports}, accessed: 2025-01-24.

\bibitem[Yu and Egbelu(2008)]{cross-dock}
W.~Yu and P.~J. Egbelu, ``Scheduling of inbound and outbound trucks in cross docking systems with temporary storage,'' \emph{Eur. J. Oper. Res.}, vol. 184, no.~1, pp. 377--396, 2008.

\bibitem[Roodbergen and Vis(2009)]{asrs-survey}
K.~J. Roodbergen and I.~F.~A. Vis, ``A survey of literature on automated storage and retrieval systems,'' \emph{Eur. J. Oper. Res.}, vol. 194, no.~2, pp. 343--362, 2009.

\bibitem[Yalcin(2017)]{yalcin2017multi}
A.~Yalcin, ``Multi-agent route planning in grid-based storage systems,'' Ph.D. dissertation, Europa-Universit{\"a}t Viadrina Frankfurt, 2017.

\bibitem[Gue et~al.(2012)Gue, Uludag, and Furmans]{carton-seq}
K.~R. Gue, O.~Uludag, and K.~Furmans, ``A high-density system for carton sequencing,'' in \emph{Proceedings of the international material handling research colloquium}, 2012.

\bibitem[aut({\natexlab{b}})]{auto-garage}
``Hidden like secret bases – automated multistory parking facilities,'' \url{https://web-japan.org/trends/11_tech-life/tec170223.html}, accessed: 2025-01-24.

\bibitem[van Brink and van~der Zwaan(2014)]{DBLP:conf/esa/BrinkZ14}
M.~van Brink and R.~van~der Zwaan, ``A branch and price procedure for the container premarshalling problem,'' in \emph{{ESA}}, ser. Lecture Notes in Computer Science, vol. 8737.\hskip 1em plus 0.5em minus 0.4em\relax Springer, 2014, pp. 798--809.

\bibitem[Gue and Kim(2007)]{gue2007puzzle}
K.~R. Gue and B.~S. Kim, ``Puzzle-based storage systems,'' \emph{Naval Research Logistics (NRL)}, vol.~54, no.~5, pp. 556--567, 2007.

\bibitem[Guo and Yu(2023)]{JJ-parking}
T.~Guo and J.~Yu, ``Toward efficient physical and algorithmic design of automated garages,'' in \emph{{ICRA}}.\hskip 1em plus 0.5em minus 0.4em\relax {IEEE}, 2023, pp. 1364--1370.

\bibitem[Huang et~al.(2022)Huang, Danielczuk, Kim, Fu, Tam, Ichnowski, Angelova, Ichter, and Goldberg]{bluction}
H.~Huang, M.~Danielczuk, C.~M. Kim, L.~Fu, Z.~Tam, J.~Ichnowski, A.~Angelova, B.~Ichter, and K.~Goldberg, ``Mechanical search on shelves using a novel "bluction" tool,'' in \emph{{ICRA}}.\hskip 1em plus 0.5em minus 0.4em\relax {IEEE}, 2022, pp. 6158--6164.

\bibitem[Gue(2006)]{gue2006very}
K.~R. Gue, ``Very high density storage systems,'' \emph{IIE transactions}, vol.~38, no.~1, pp. 79--90, 2006.

\bibitem[Bukchin and Raviv(2022)]{bukchin2022comprehensive}
Y.~Bukchin and T.~Raviv, ``A comprehensive toolbox for load retrieval in puzzle-based storage systems with simultaneous movements,'' \emph{Transportation Research Part B: Methodological}, vol. 166, pp. 348--373, 2022.

\bibitem[Kota et~al.(2015)Kota, Taylor, and Gue]{kota2015retrieval}
V.~R. Kota, D.~Taylor, and K.~R. Gue, ``Retrieval time performance in puzzle-based storage systems,'' \emph{Journal of Manufacturing Technology Management}, vol.~26, no.~4, pp. 582--602, 2015.

\bibitem[Guo and Yu(2024)]{JJ-WCS}
T.~Guo and J.~Yu, ``Well-connected set and its application to multi-robot path planning,'' in \emph{{ICRA}}.\hskip 1em plus 0.5em minus 0.4em\relax {IEEE}, 2024, pp. 16\,560--16\,566.

\bibitem[Caserta et~al.(2020)Caserta, Schwarze, and Vo{\ss}]{caserta2020container}
M.~Caserta, S.~Schwarze, and S.~Vo{\ss}, ``Container rehandling at maritime container terminals: A literature update,'' \emph{Handbook of terminal planning}, pp. 343--382, 2020.

\bibitem[Lersteau and Shen(2022)]{brp-survey}
C.~Lersteau and W.~Shen, ``A survey of optimization methods for block relocation and premarshalling problems,'' \emph{Computers \& Industrial Engineering}, vol. 172, p. 108529, 2022.

\bibitem[Zehendner et~al.(2017)Zehendner, Feillet, and Jaillet]{DBLP:journals/eor/ZehendnerFJ17}
E.~Zehendner, D.~Feillet, and P.~Jaillet, ``An algorithm with performance guarantee for the online container relocation problem,'' \emph{Eur. J. Oper. Res.}, vol. 259, no.~1, pp. 48--62, 2017.

\bibitem[Hakan~Aky{\"u}z and Lee(2014)]{hakan2014mathematical}
M.~Hakan~Aky{\"u}z and C.-Y. Lee, ``A mathematical formulation and efficient heuristics for the dynamic container relocation problem,'' \emph{Naval Research Logistics (NRL)}, vol.~61, no.~2, pp. 101--118, 2014.

\bibitem[Hanou et~al.(2023)Hanou, de~Weerdt, and Mulderij]{train-ordering}
I.~K. Hanou, M.~M. de~Weerdt, and J.~Mulderij, ``Moving trains like pebbles: {A} feasibility study on tree yards,'' in \emph{{ICAPS}}.\hskip 1em plus 0.5em minus 0.4em\relax {AAAI} Press, 2023, pp. 482--490.

\bibitem[Dogar et~al.(2014)Dogar, Koval, Tallavajhula, and Srinivasa]{dogar2014object-mech-search}
M.~R. Dogar, M.~C. Koval, A.~Tallavajhula, and S.~S. Srinivasa, ``Object search by manipulation,'' \emph{Autonomous Robots}, vol.~36, pp. 153--167, 2014.

\bibitem[Huang et~al.(2021)Huang, Dominguez{-}Kuhne, Satish, Danielczuk, Sanders, Ichnowski, Lee, Angelova, Vanhoucke, and Goldberg]{huang2021mechanical}
H.~Huang, M.~Dominguez{-}Kuhne, V.~Satish, M.~Danielczuk, K.~Sanders, J.~Ichnowski, A.~Lee, A.~Angelova, V.~Vanhoucke, and K.~Goldberg, ``Mechanical search on shelves using lateral access {X-RAY},'' in \emph{{IROS}}.\hskip 1em plus 0.5em minus 0.4em\relax {IEEE}, 2021, pp. 2045--2052.

\bibitem[Chen et~al.(2024)Chen, Huang, and Goldberg]{KGoptarr}
L.~Y. Chen, H.~Huang, and K.~Goldberg, ``Optimal arrangement and rearrangement of objects on shelves to minimize robot retrieval cost,'' \emph{{IEEE} Trans Autom. Sci. Eng.}, vol.~21, no.~3, pp. 2184--2198, 2024.

\end{thebibliography}
}

\end{document}